\newtheorem{theorem}{Theorem}
\newtheorem{lemma}[theorem]{Lemma}
\newtheorem{definition}{Definition}
\newcommand{\indep}{\perp \!\!\! \perp}
\def\1{\bm{1}}
\DeclareMathAlphabet{\mathsfit}{\encodingdefault}{\sfdefault}{m}{sl}
\SetMathAlphabet{\mathsfit}{bold}{\encodingdefault}{\sfdefault}{bx}{n}
\renewcommand{\epsilon}{\varepsilon}
\newcommand{\ba}{\mathbf{a}}
\newcommand{\br}{\mathbf{r}}
\newcommand{\bt}{\mathbf{t}}
\newcommand{\bx}{\mathbf{x}}
\newcommand{\bz}{\mathbf{z}}
\newcommand{\bS}{\mathbf{S}}
\newcommand{\bT}{\mathbf{T}}
\newcommand{\bX}{\mathbf{X}}
\newcommand{\bZ}{\mathbf{Z}}
\newcommand{\btheta}{\bm{\theta}}
\newcommand{\cF}{\mathcal{F}}
\newcommand{\cG}{\mathcal{G}}
\newcommand{\cH}{\mathcal{H}}
\newcommand{\cL}{\mathcal{L}}
\newcommand{\cR}{\mathcal{R}}
\newcommand{\cT}{\mathcal{T}}
\newcommand{\bbR}{\mathbb{R}}
\crefname{equation}{Eq.}{Eq.}
\crefname{figure}{Fig.}{Fig.}
\crefname{table}{Tab.}{Tab.~}
\crefname{section}{Sec.}{Sec.~}
\crefname{algorithm}{Alg.}{Alg.~}
\crefname{thm}{Theorem}{Theorem~}
\crefname{appendix}{Appendix}{Appendix~}
\def\ie{\textit{i.e.,~}}
\def\eg{\textit{e.g.,~}}
\def\etc{\textit{etc}}
\def\wrt{\textit{w.r.t.~}}
\def\sota{state-of-the-art~}
\title{
Residual Relaxation for \\
Multi-view Representation Learning}
\author{%
  Yifei Wang$^1$ \quad Zhengyang Geng$^2$ \quad Feng Jiang$^2$ \quad Chuming Li$^3$ \\
  \textbf{Yisen Wang}$^2$\thanks{Corresponding author: Yisen Wang (yisen.wang@pku.edu.cn).} \quad \textbf{Jiansheng Yang}$^1$ \quad \textbf{Zhouchen Lin}$^{2,4}$
  \vspace{0.05in} \\
   $^1$ School of Mathematical Sciences, Peking University, China \\
   $^2$ Key Lab. of Machine Perception, School of EECS, Peking Univesity, China \\
   $^3$ School of Engineering, The University of Sydney, Australia \\
   $^4$ Pazhou Lab, Guangzhou 510330, China
   \vspace{0.05in} \\
}
\begin{document}

\maketitle

\begin{abstract}
Multi-view methods learn representations by aligning multiple views of the same image and their performance largely depends on the choice of data augmentation. In this paper, we notice that some other useful augmentations, such as image rotation, are harmful for multi-view methods because they cause a semantic shift that is too large to be aligned well. This observation motivates us to relax the exact alignment objective to better cultivate stronger augmentations. Taking image rotation as a case study, we develop a generic approach, Pretext-aware Residual Relaxation (Prelax), that relaxes the exact alignment by allowing an adaptive residual vector between different views and encoding the semantic shift through pretext-aware learning. Extensive experiments on different backbones show that our method can not only improve multi-view methods with existing augmentations, but also benefit from stronger image augmentations like rotation.

\end{abstract}

\section{Introduction}

Without access to labels, {self-supervised learning} relies on surrogate objectives to {extract meaningful representations from unlabeled data}, and the chosen surrogate objectives largely determine the quality and property of the learned representations~\cite{InfoMin,metzger2020evaluating}. {Recently, multi-view methods have become a dominant approach for self-supervised representation learning that} achieves impressive downstream performance, and many modern variants have been proposed \cite{InfoNCE,infomax, AMDIM,cmc,simclr,moco,simclrv2,mocov2,BYOL,simsiam}. Nevertheless, most multi-view methods can be abstracted and summarized as the following pipeline: for each input $\bx$, we apply several (typically two) random augmentations to it, and learn to align these different ``views'' ($\bx_1,\bx_2,\dots$) of $\bx$ by minimizing their distance in the representation space.

In multi-view methods, the pretext, \ie image augmentation, has a large effect on the final performance. Typical choices include image re-scaling, cropping, color jitters, \emph{etc}~\cite{simclr}. However, we find that some augmentations, for example, image rotation, is seldom utilized in \sota multi-view methods. 
Among these augmentations, Figure \ref{fig:augmentation-comparison} shows that rotation causes severe accuracy drop in a standard supervised model.
Actually, image rotation is a stronger augmentation that largely affects the image semantics, and as a result, enforcing exact alignment of two different rotation angles could degrade the representation ability in existing multi-view methods. Nevertheless, it does not mean that strong augmentations cannot provide useful semantics for representation learning. In fact, rotation is known as an effective signal for predictive  learning \cite{rotation,colorization,jigsaw}. Differently, predictive methods learn representations by predicting the pretext (\eg rotation angle) from the corresponding view. In this way, the model is encouraged to encode pretext-aware image semantics, which also yields good representations. 

To summarize, strong augmentations like rotation carry meaningful semantics, while being harmful for existing multi-view methods due to large semantic shift. To address this dilemma, in this paper, we propose a generic approach that generalizes multi-view methods to cultivating stronger augmentations. Drawing inspirations from the soft-margin SVM, we propose \emph{residual alignment}, which relaxes the exact alignment in multi-view methods by incorporating a residual vector between two views. Besides, we develop a predictive loss for the residual to ensure that it encodes the semantic shift between views (\eg image rotation). 
We name this technique as Pretext-aware REsidual ReLAXation (Prelax), and an illustration is shown in Figure \ref{fig:alignment-comparison}. Prelax serves as a generalized multi-view method that is adaptive to large semantic shift and combines image semantics extracted from both pretext-invariant and pretext-aware methods. 
We summarize our contributions as follows:
\begin{itemize}
    \item We propose a generic technique, Pretext-aware Residual Relaxation (Prelax), that generalizes multi-view representation learning to benefit from stronger image augmentations.
    \item Prelax not only extracts pretext-invariant features as in multi-view methods, but also encodes pretext-aware features into the pretext-aware residuals. Thus, it can serve as a unified approach to bridge the two existing methodologies for representation learning.
    \item Experiments show that Prelax can bring significant improvement over both multi-view and predictive methods on a wide range of benchmark datasets. 

\end{itemize}

\begin{figure}
    \centering
    \begin{subfigure}{.45\textwidth}
        \centering
        \includegraphics[width=\linewidth]{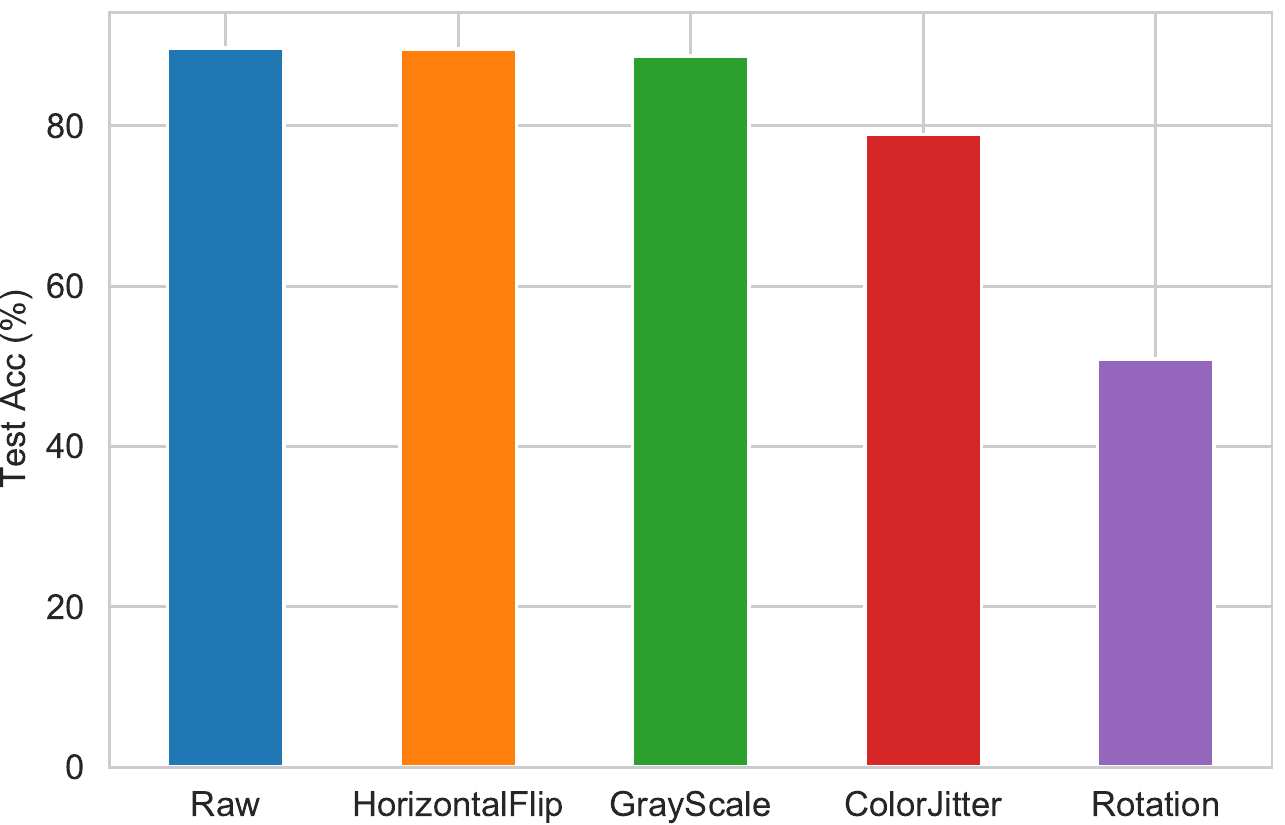}
        \caption{Comparison of augmentations.}
        \label{fig:augmentation-comparison}
    \end{subfigure}
    \hfill
    \begin{subfigure}{.51\textwidth}
        \centering
        \includegraphics[width=\linewidth]{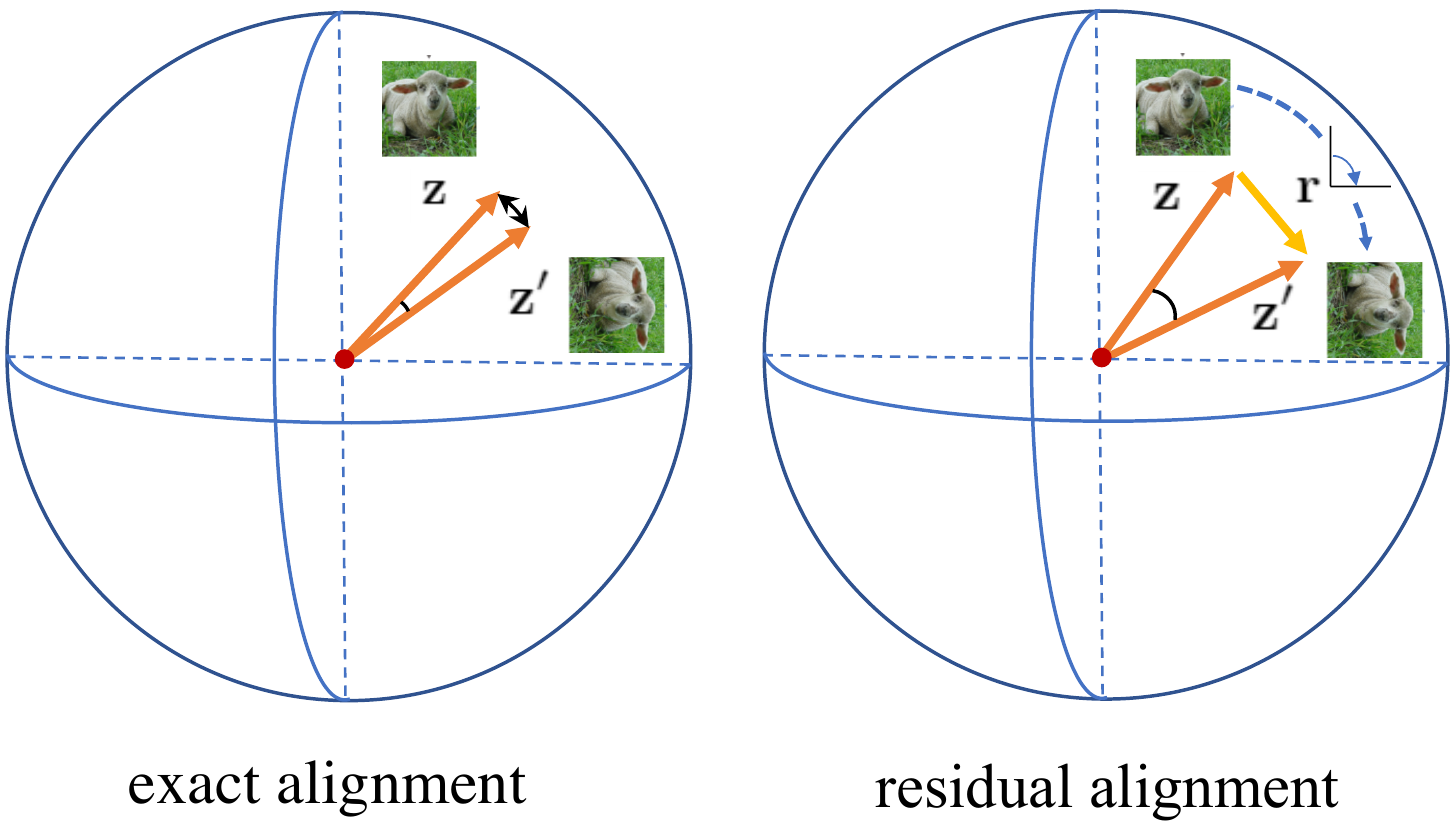}
        \caption{A toy example of residual relaxation.}
        \label{fig:alignment-comparison}
    \end{subfigure}
    \caption{{Left: the effect of different augmentations of CIFAR-10 test images with a supervised model (trained without using any data augmentation, more details in Appendix \ref{sec:appendix-experimental-details}).}
    Right: an illustration of the exact alignment objective of multi-view methods ($\bz'\rightarrow\leftarrow\bz$) and the relaxed residual alignment of our Prelax ($\bz'-\br\rightarrow\leftarrow\bz$). As the rotation largely modifies the image semantics, our Prelax adopts a rotation-aware residual vector $\br$ to bridge the representation of two different views.}
    \label{fig:prelax-diagrams}
\end{figure}

\section{Related Work}

\textbf{Multi-view Learning.} {Although multi-view learning could refer to a wider literature \cite{li2018survey}, here we restrict our discussions to the context of Self-Supervised Learning (SSL), where multi-view methods learn representations by aligning multiple views of the same image generated through random data augmentation \cite{AMDIM}. }
There are two kinds of methods to keep the representations well separated: contrastive methods, which achieve this by maximizing the difference between different samples \cite{simclr,moco}, and similarity-based methods, which prevent representation collapse via implicit mechanisms like predictor and gradient stopping ~\cite{BYOL,simsiam}.
Although having lots of modern variants, multi-view methods share the same methodology, that is to extract features that are \emph{invariant} to the predefined augmentations, \ie pretext-invariant features \cite{pirl}. 

\textbf{Predictive Learning.} 
Another thread of methods is to learn representations by predicting self-generated surrogate labels. 
Specifically, it applies a transformation (\eg image rotation) to the input image and requires the learner to predict properties of the transformation (\eg the rotation angle) from the transformed images. 
As a result, the extracted image representations are encouraged to become aware of the applied pretext (\eg image rotation). Thus, we also refer to them as \emph{pretext-aware methods}. The pretext tasks can be various, to name a few, Rotation~\cite{rotation}, Jigsaw~\cite{jigsaw}, Relative Path Location~\cite{rel_loc_pred}, Colorization~\cite{colorization}.

\textbf{Generalized Multi-view Learning.} Although there are plenty of works on each branch, how to bridge the two methodologies remains under-explored. Prior to our work, there are only a few works on this direction. Some directly combine AMDIM (pretext-invariant) \cite{AMDIM} and Rotation (pretext-aware) \cite{rotation} objectives \cite{feng2019self}. However, a direct combination of the two contradictory objectives may harm the final representation.
LooC~\cite{looc} proposes to separate the embedding space to several parts, where each subspace learns local invariance \wrt a specific augmentation. But this is achieved at the cost of limiting the representation flexibility of each pretext to the predefined subspace. Different from them, our proposed Prelax provides a more general solution by allowing an adaptive residual vector to encode the semantic shift. In this way, both kinds of features are encoded in the same representation space.

\section{The Proposed Pretext-aware Residual Relaxation (Prelax) Method}

\subsection{Preliminary}
\textbf{Problem Formulation.}
Given unlabeled data $\{\bx_i\}$, unsupervised representation learning aims to learn an encoder network $\cF_{\btheta}$ that extracts meaningful low-dimensional representations $\bz\in\bbR^{d_z}$ from high-dimensional input images $\bx\in\bbR^{d_x}$. The learned representation is typically evaluated on a downstream classification task by learning a linear classifier with labeled data $\{\bx_i,y_i\}$.

\textbf{Multi-view Representation Learning.} For an input image $\bx\in\bbR^{d_x}$, we can generate a different view  by data augmentation, $\bx'=t(\bx)$, where $t\in\cT$ refers to a randomly drawn augmentation operator from the pretext set $\cT$. Then, the transformed input $\bx'$ and the original input $\bx$ are passed into an online network $\cF_{\btheta}$ and a target network $\cF_{\bm \phi}$, 
respectively. Optionally, the output of the online network is further processed by an MLP predictor network $\cG_{\btheta}$, to match the output of the target network. As two different views of the same image (\ie positive samples), $\bx$ and $\bx'$ should have similar representations, so we align their representations with the following similarity loss,
\begin{equation}
    \cL_{\rm sim}(\bx',\bx;\btheta)=\Vert\cG_{\btheta}\left(\cF_{\btheta}(\bx')\right) - \cF_{\bm\phi}(\bx)\Vert_2^2.
    \label{eqn:similarity-loss}
\end{equation}
The representations, \eg $\bz=\cF_{\btheta}(\bx)$, are typically projected to a unit spherical ball before calculating the distance $(\bz/\Vert\bz\Vert_2)$, which makes the $\ell_2$ distance equivalent to the cosine similarity~\cite{simclr}.

\textbf{Remark.} Aside from the similarity loss between positive samples, contrastive  methods~\cite{InfoNCE,infomax,cmc,simclr} further encourage representation uniformity with an additional regularization minimizing the similarity between input and an independently drawn negative sample. Nevertheless, some recent works find that the similarity loss alone already suffices \cite{BYOL,simsiam}. In this paper, we mainly focus on improving the alignment between positive samples in the similarity loss. It can also be easily extended to contrastive methods by considering the dissimilarity regularization.

\subsection{Objective Formulation}
\label{sec:method}

As we have noticed, the augmentation sometimes may bring a certain amount of semantic shift.
Thus, enforcing exact alignment of different views may hurt the representation quality, particularly when the data augmentation is too strong for the positive pairs to be matched exactly. 
Therefore, we need to relax the exact alignment in Eq.~\eqref{eqn:similarity-loss} to account for the semantic shift brought by the data augmentation.

\textbf{Residual Relaxed Similarity Loss.}
Although the representations may not align exactly, \ie $\bz'\neq\bz$, however, the \emph{representation identity} will always hold: $\bz' - (\bz'-\bz) = \bz$, where $\bz'-\bz$ represents the shifted semantics by augmentation. 
This makes this identity a proper candidate for multi-view alignment under various augmentations as long as the shifted semantic is taken into consideration.

Specifically, we replace the exact alignment (denoted as $\rightarrow\leftarrow$) in the similarity loss (Eq.~\eqref{eqn:similarity-loss}) by the proposed \emph{identity alignment}, \ie
\begin{equation}
\cG_{\btheta}(\bz'_{\btheta})\rightarrow\leftarrow\bz_{\bm\phi} 
\quad\Rightarrow\quad
\cG_{\btheta}(\bz'_{\btheta})-\cG_{\btheta}(\br)\rightarrow\leftarrow\bz_{\bm\phi},
 \label{eqn:identity-alignment}
\end{equation}
where we include a residual vector $\br\overset{\Delta}{=}\bz'_{\btheta} - \bz_{\btheta}=\cF_{\btheta}(\bx')-\cF_{\btheta}(\bx)$ to represent the difference on the representations. 
To further enable a better tradeoff between the exact and identity alignments, we have the following \emph{residual alignment}:
\begin{equation}
 \cG_{\btheta}(\bz'_{\btheta})-\alpha \cG_{\btheta}(\br)
 \rightarrow\leftarrow\bz_{\bm\phi},
\label{eqn:unified-alignment}
\end{equation}
where $\alpha\in[0,1]$ is the interpolation parameter. 
When $\alpha=0$, we recover the exact alignment; when $\alpha=1$, we recover the identity alignment.
We name the corresponding learning objective as the Residual Relaxed Similarity (R2S) loss, which minimizes the squared $\ell_2$ distance among two sides:
\begin{equation}
\cL^\alpha_{\rm R2S}(\bx',\bx;\btheta)=
\Vert\cG_{\btheta}\left(\cF_{\btheta}(\bx')\right) -\alpha \cG_{\btheta}(\br)- \cF_{\bm\phi}(\bx)\Vert_2^2.
\label{eqn:relaxed-similarity-loss}
\end{equation}

\textbf{Predictive Learning (PL) Loss.} 
To ensure the relaxation works as expected, the residual $\br$ should encode the semantic shift caused by the augmentation, \ie the pretext. Inspired by predictive learning \cite{rotation}, we utilize the residual to predict the corresponding augmentation for its pretext-awareness.
In practice, the assigned parameters for the random augmentation $\bt$ can be generally divided into the discrete categorical variables $\bt^d$ (\eg flipping or not, graying or not), and the continuous variables $\bt^c$ (\eg scale, ratio, jittered brightness). Thus, we learn a PL predictor $\cH_{\btheta}$ to predict $(\bt^d,\bt^c)$ with cross entropy loss (CE) and mean square error loss (MSE), respectively:
\begin{equation}
    \cL_{\rm PL}(\bx',\bx,\bt;\btheta)=\operatorname{CE}(\cH^d_{\btheta}(\br),\bt^d) + \Vert\cH^c_{\btheta}(\br)-\bt^c\Vert_2^2.
    \label{eqn:PL-loss}
\end{equation}

\textbf{Constraint on the Similarity.} 
Different from the exact alignment, the residual vector can be unbounded, \ie the difference between views can be arbitrarily large. This is not reasonable as the two views indeed share many common semantics. Therefore, we should utilize this prior knowledge to prevent the bad cases under residual similarity and add the following constraint  
\begin{equation}
    \begin{aligned}
    \cL_{\rm sim} = \Vert\cG_{\btheta}\left(\cF_{\btheta}(\bx')\right) - \cF_{\bm\phi}(\bx)\Vert_2^2\leq\varepsilon,
    \end{aligned}
\end{equation}
where $\varepsilon$ denotes the maximal degree of mismatch allowed between positive samples.

\textbf{The Overall Objective of Prelax.} By combining the three components above, we can reliably encode the semantic shift between augmentations while ensuring a good alignment between views:
\begin{equation}
\begin{gathered}
\min_{\btheta}\ \cL^\alpha_{\rm R2S}(\bx',\bx;\btheta) + \gamma \cL_{\rm PL}(\bx',\bx;\btheta), \\
s.t.\quad \Vert\cG_{\btheta}\left(\cF_{\btheta}(\bx')\right) - \cF_{\bm\phi}(\bx)\Vert_2^2\leq\varepsilon.
\end{gathered}
\label{eqn:Prelax}
\end{equation}
For simplicity, we transform it into a Lagrangian objective with a fixed multiplier $\beta\geq0$, and obtain the overall Pretext-aware REsidual ReLAXation (Prelax) objective,
\begin{equation}
\cL^\alpha_{\rm R2S}(\bx',\bx;\btheta)+\gamma \cL_{\rm PL}(\bx',\bx;\btheta)+\beta\cL_{\rm sim}(\bx',\bx;\btheta),
\end{equation}
where $\alpha$ tradeoffs between the exact and identity alignments, $\gamma$ adjusts the amount of pretext-awareness of the residual, and $\beta$ controlls the degree of similarity between positive pairs. An illustrative diagram of the Prelax objective is shown in Figure \ref{fig:prelax-diagrams}. 
\begin{figure}[!t]
\centering
\includegraphics[width=.6\linewidth]{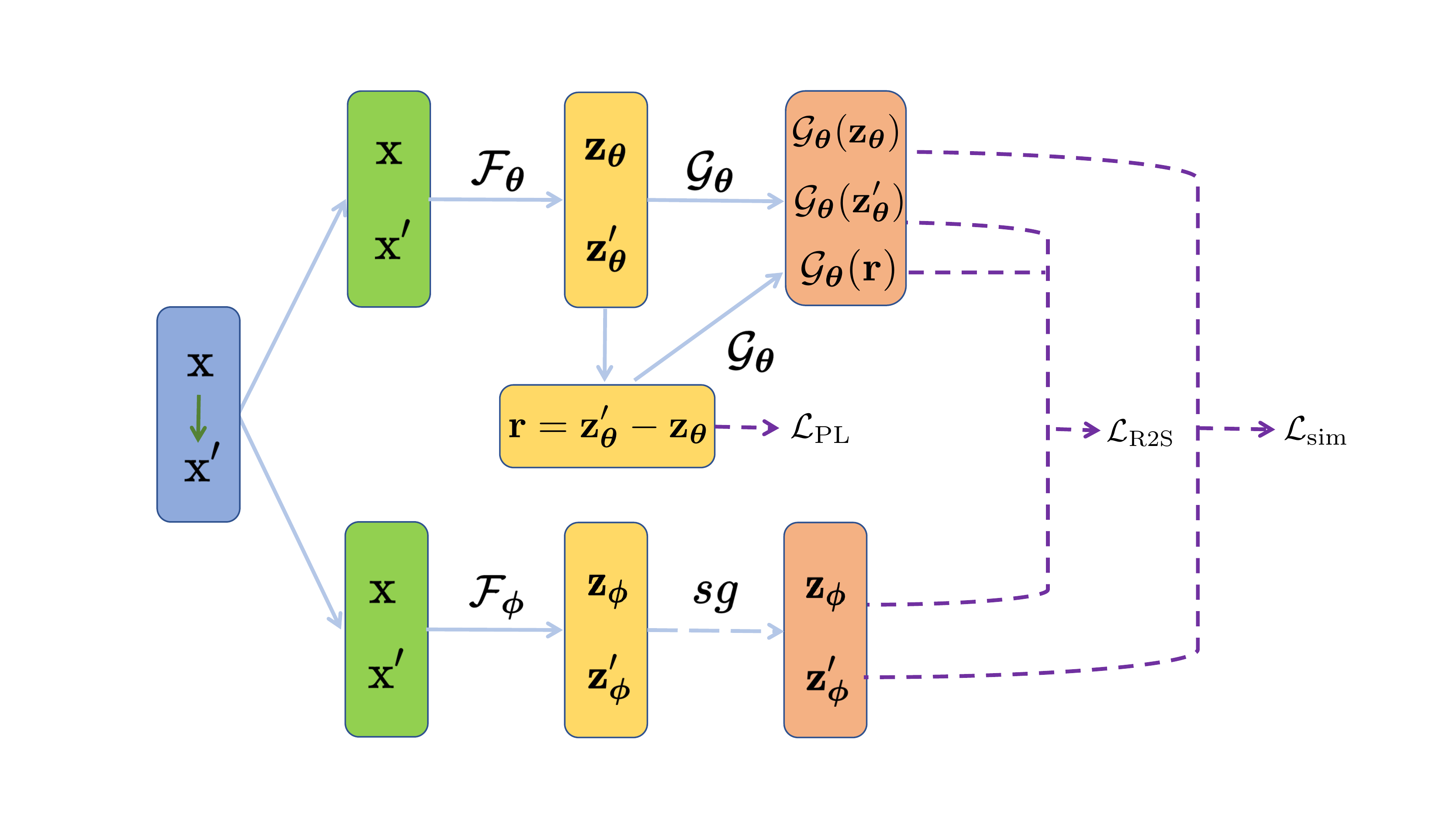}
\caption{A diagram of our proposed Prelax objective. 
{An image $\bx$ is firstly augmented as $\bx'$. Then the positive pair $(\bx,\bx')$, is processed by the online network $\cF_{\btheta}$ and the target network $\cF_{\bm\phi}$, respectively. Output of the online network is further processed by the target network $\cG_{\btheta}$, and the gradient of $\cF_{\bm\phi}$ is detached, \ie $\operatorname{stop\_grad}$, denoted as $\operatorname{sg}$. Then the outputs are used to compute the three objectives, $\cL_{\rm R2S}$ (Eq.~\ref{eqn:relaxed-similarity-loss}),  $\cL_{\rm PL}$ (Eq.~\ref{eqn:PL-loss}), and $\cL_{\rm sim}$ (Eq.~\ref{eqn:similarity-loss}) in the Prelax objective (Eq.~\ref{eqn:Prelax}).}
}
\label{fig:prelax-objective}
\end{figure}

\textbf{Discussions. } In fact, there are other alternatives to relax the exact alignment. For example, we can utilize a margin loss 
\begin{equation}
    \cL_{\rm margin}(\bx',\bx;\btheta)=\max(\Vert \cG_{\btheta}\left(\cF_{\btheta}(\bx')\right) - \cF_{\bm\phi}(\bx)\Vert_2^2-\eta,0),
    \label{eqn:margin-loss}
\end{equation}
where $\eta > 0$ is a threshold for the mismatch tolerance. However, it has two main drawbacks: 1) as each image and augmentation have different semantics, it is hard to choose a universal threshold for all images; and 2) the representation keeps shifting along with the training progress, making it even harder to maintain a proper threshold dynamically.
Thus, a good relaxation should be adaptive to the training progress and the aligning of different views. While our Prelax adopts \emph{pretext-aware residual vector}, which is learnable, flexible, and semantically meaningful.

\subsection{Theoretical Analysis}

As Prelax encodes both pretext-invariant and pretext-aware features, it can be semantically richer than both multi-view learning and predictive learning. Following the information-theoretic framework developed by \cite{tsai2020self}, we show that Prelax provably enjoys better downstream performance.

We denote the random variable of input as $\bX$ and learn a representation $\bZ$ through a deterministic encoder $\cF_{\btheta}$: $\bZ=\cF_{\btheta}(\bX)$\footnote{We use capitals to denote the random variable, \eg $\bX$, and use lower cases to denote its outcome, \eg $\bx$.}.
The representation $\bZ$ is evaluated for a downstream task $\bT$ by learning a classifier on top of $\bZ$. From an information-theoretic learning perspective, a desirable algorithm should maximize the Mutual Information (MI) between $\bZ$ and $\bT$, \ie $I(\bZ;\bT)$ \cite{cover1999elements}.
Supervised learning on task $\bT$ can learn representations by directly maximizing $I(\bZ;\bT)$.
Without access to the labels $\bT$, unsupervised learning resorts to maximizing $I(\bZ;\bS)$, where $\bS$ denotes the surrogate signal $\bS$ designed by each method. 
Specifically, multi-view learning matches $\bZ$ with the randomly augmented view, denoted as $\bS_v$; while predictive learning uses $\bZ$ to predict the applied augmentation, denoted as $\bS_a$. In Prelax, as we combine both semantics, we actually maximize the MI \wrt their joint distribution, \ie $I(\bZ;\bS_v,\bS_a)$. We denote the representations learned by supervised learning, multi-view learning, predictive learning, and Prelax as $\bZ_{\rm sup}, \bZ_{\rm mv},\bZ_{\rm PL}, \bZ_{\rm Prelax}$, respectively.

\begin{theorem}
Assume that by maximizing the mutual information, each method can retain all information in $\bX$ about the learning signal $\bS$ (or $\bT$), \ie $I(\bX;\bS)=\max_{\bZ} I(\bZ;\bS)$. Then we have the following inequalities on their task-relevant information $I(\bZ;\bT)$:
\begin{gather}
I(\bX;\bT)=I(\bZ_{\rm sup};\bT)\geq I(\bZ_{\rm Prelax};\bT)\geq\max(I(\bZ_{\rm mv};\bT),I(\bZ_{\rm PL};\bT)).
\end{gather}
\label{thm:task-information}
\end{theorem}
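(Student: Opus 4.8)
The plan is to exploit two elementary properties of mutual information: the data-processing inequality (DPI) and the monotonicity of MI under enlarging the conditioning/observed signal. The key modelling assumption to use is the stated one: each method, by maximizing MI, achieves $I(\bZ;\bS)=I(\bX;\bS)$ where $\bS$ is its own surrogate signal — so $\bZ$ is (informationally) a sufficient statistic of $\bX$ for $\bS$. I would also assume, as is implicit in this framework, that the downstream task $\bT$ together with each surrogate satisfies a Markov structure in which $\bS$ is generated from $\bX$ (and $\bT$), so that $\bT \to \bX \to \bZ$ holds for every method; this immediately gives the leftmost chain $I(\bX;\bT)=I(\bZ_{\rm sup};\bT)\geq I(\bZ;\bT)$ for any $\bZ$ produced by processing $\bX$, since supervised learning directly targets $\bT$ and hence (under the assumption) retains $I(\bX;\bT)$, while DPI caps everyone else at $I(\bX;\bT)$.

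The substantive inequality is $I(\bZ_{\rm Prelax};\bT)\geq\max(I(\bZ_{\rm mv};\bT),I(\bZ_{\rm PL};\bT))$. Here the point is that Prelax's signal is the joint $(\bS_v,\bS_a)$, which contains each individual signal. First I would show $I(\bZ_{\rm Prelax};\bS_v)\geq I(\bZ_{\rm mv};\bS_v)$: both sides equal $I(\bX;\cdot)$ of the respective signals under the assumption, but $I(\bX;\bS_v,\bS_a)\geq I(\bX;\bS_v)$, and since $\bZ_{\rm Prelax}$ is sufficient for $(\bS_v,\bS_a)$ it is in particular sufficient for $\bS_v$, so $I(\bZ_{\rm Prelax};\bS_v)=I(\bX;\bS_v)=I(\bZ_{\rm mv};\bS_v)$. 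That alone is not enough — I need to transfer an inequality about information on $\bS_v$ into one about information on $\bT$. The bridge is the assumption, common to this line of work (\cite{tsai2020self}), that the surrogate task is \emph{task-relevant} in the sense $\bT \to \bS_v \to \bX$ or, more precisely, that $\bZ$ sufficient for $\bS_v$ retains all the $\bT$-information that $\bS_v$ carries, i.e. $I(\bZ_{\rm mv};\bT)\leq I(\bS_v;\bT)$ with equality-type control, and similarly $I(\bZ_{\rm Prelax};\bT)\geq I(\bS_v;\bT)$ because $\bZ_{\rm Prelax}$ is sufficient for a \emph{richer} signal. Concretely: since $\bZ_{\rm Prelax}$ is a sufficient statistic of $\bX$ for $(\bS_v,\bS_a)$, the chain $\bT\to(\bS_v,\bS_a)\to \bX\to\bZ_{\rm Prelax}$ together with sufficiency yields $I(\bZ_{\rm Prelax};\bT)=I(\bX;\bT)\wedge(\text{something})$; more carefully, one shows $I(\bZ_{\rm Prelax};\bT)\geq I(\bZ_{\rm mv};\bT)$ by noting $\bZ_{\rm mv}$ is a (stochastic) function of $\bZ_{\rm Prelax}$ up to the $\bS_v$-sufficient part, hence DPI in the form $I(\bZ_{\rm Prelax};\bT)\geq I(f(\bZ_{\rm Prelax});\bT)=I(\bZ_{\rm mv};\bT)$. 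The identical argument with $\bS_a$ in place of $\bS_v$ gives $I(\bZ_{\rm Prelax};\bT)\geq I(\bZ_{\rm PL};\bT)$, and taking the max finishes it.

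I expect the main obstacle to be making precise the claim that ``a representation sufficient for the larger signal $(\bS_v,\bS_a)$ dominates, for \emph{downstream} purposes, a representation sufficient only for $\bS_v$.'' The honest version requires either (i) an additional structural assumption, e.g. that $\bT\indep \bX \mid (\bS_v,\bS_a)$ or a weaker "redundancy" condition relating $\bT$, the surrogates, and $\bX$, under which one can write $I(\bZ_{\rm Prelax};\bT)=I(\bX;\bT)$ outright and hence trivially dominate; or (ii) a careful coupling argument exhibiting $\bZ_{\rm mv}$ and $\bZ_{\rm PL}$ as post-processings of $\bZ_{\rm Prelax}$ so that DPI applies directly. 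Option (ii) is cleaner and is, I believe, what the intended proof does: construct $\bZ_{\rm Prelax}$ so that it retains enough to reproduce (in distribution) both $\bZ_{\rm mv}$ and $\bZ_{\rm PL}$, then the three displayed inequalities are immediate consequences of DPI plus the supervised-learning optimality assumption. The routine steps — writing out the Markov chains, invoking DPI, invoking the $I(\bX;\bS)=\max_\bZ I(\bZ;\bS)$ hypothesis to collapse each surrogate bound — I would not belabor; the care goes entirely into stating the Markov/redundancy assumptions so that "joint signal $\Rightarrow$ richer representation $\Rightarrow$ more task information" is actually valid rather than merely plausible.
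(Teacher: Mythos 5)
Your first step (the chain $I(\bX;\bT)=I(\bZ_{\rm sup};\bT)\geq I(\bZ;\bT)$ via the deterministic-encoder Markov chain $\bT\to\bX\to\bZ$ and DPI) is fine and matches the paper. The genuine gap is in the substantive middle inequality $I(\bZ_{\rm Prelax};\bT)\geq\max(I(\bZ_{\rm mv};\bT),I(\bZ_{\rm PL};\bT))$, which you never actually establish. Your proposed mechanism --- that $\bZ_{\rm mv}$ is a (stochastic) post-processing of $\bZ_{\rm Prelax}$ so that DPI applies --- is unsupported: both representations are separate deterministic functions of $\bX$, and sufficiency of $\bZ_{\rm Prelax}$ for $(\bS_v,\bS_a)$ (\ie $I(\bZ_{\rm Prelax};\bS_v,\bS_a)=I(\bX;\bS_v,\bS_a)$) does not make $\bZ_{\rm mv}$, or even $\bS_v$, recoverable from $\bZ_{\rm Prelax}$; your fallback assumptions ($\bT\to\bS_v\to\bX$, or $\bT\indep\bX\mid(\bS_v,\bS_a)$) are not part of the theorem. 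More importantly, you never use \emph{minimality} of the representations, and with sufficiency alone the claimed inequality is false: taking $\bZ_{\rm mv}=\bX$ (trivially sufficient for $\bS_v$) gives $I(\bZ_{\rm mv};\bT)=I(\bX;\bT)\geq I(\bZ_{\rm Prelax};\bT)$, with strict inequality whenever $\bZ_{\rm Prelax}$ compresses away any task-relevant information. So the missing idea is exactly the minimality condition $H(\bZ\mid\bS)=0$ under which the paper's appendix states and proves the result.

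The paper's route is different from both of your options: minimality gives $I(\bZ;\bT\mid\bS)\leq H(\bZ\mid\bS)=0$, while sufficiency (via the deterministic-encoder DPI lemma) gives $I(\bZ;\bT;\bS)=I(\bX;\bT;\bS)$, so for each method with its own signal $\bS$ one obtains the exact identity $I(\bZ;\bT)=I(\bZ;\bT;\bS)+I(\bZ;\bT\mid\bS)=I(\bX;\bT)-I(\bX;\bT\mid\bS)$. The comparison between methods then reduces to comparing the leftover terms $I(\bX;\bT\mid\bS)$, and the joint signal is asserted to satisfy $I(\bX;\bT\mid\bS_v,\bS_a)\leq\min\bigl(I(\bX;\bT\mid\bS_v),I(\bX;\bT\mid\bS_a)\bigr)$, which immediately yields the stated chain. (As you correctly sensed, that last step is itself a redundancy-type condition --- it amounts to nonnegativity of a conditional interaction information, which is not automatic --- but the structural device you are missing is the identity obtained from minimality, not a coupling or post-processing argument.)
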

\begin{theorem}
Further assume that $\bT$ is a $K$-class categorical variable. In general, we have the upper bound $u^e$ on the downstream Bayes errors $P^{e}:=\mathbb{E}_\bz\left[1-\max _{\bt \in \bT} P\left(\bT=\bt|\bz\right)\right]$,
\begin{equation}
\bar P^e\leq u^e :=\log 2 + P^e_{\rm sup}\cdot\log K + I(\bX;\bT|\bS).
\end{equation}
where $\bar P^e=\operatorname{Th}(P^e)=\min \{\max \{P^e, 0\}, 1-1 /K\}$ denotes the thresholded Bayes error. Accordingly, we have the following inequalities on the upper bounds for different unsupervised methods, 
\begin{equation}
    u^e_{\rm sup}\leq u^e_{\rm Prelax} \leq \min(u^e_{\rm mv}, u^e_{\rm PL}) \leq \max(u^e_{\rm mv}, u^e_{\rm PL}).
\end{equation}
\label{thm:bayes-errors}
\end{theorem}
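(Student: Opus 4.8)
The plan is to first establish the scalar inequality $\bar P^e \le u^e$ for a single representation, and then deduce the chain of inequalities among $u^e_{\rm sup}, u^e_{\rm Prelax}, u^e_{\rm mv}, u^e_{\rm PL}$ by combining that bound with the mutual information ordering already proved in Theorem~\ref{thm:task-information}. For the first part, I would follow the information-theoretic route of \cite{tsai2020self}: relate the Bayes error $P^e = \E_{\bz}[1 - \max_{\bt}P(\bT=\bt|\bz)]$ to the conditional entropy $H(\bT|\bZ)$ via a Fano-type / Hellman--Raviv-type inequality (for a $K$-class variable one has bounds of the shape $\bar P^e \le \tfrac{1}{2}H(\bT|\bZ)$ or a $\log 2$-shifted variant after thresholding with $\operatorname{Th}(\cdot)$). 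Then I decompose $H(\bT|\bZ)$ using the surrogate signal $\bS$: write $H(\bT|\bZ) \le H(\bT|\bZ, \bS) + I(\bT;\bS|\bZ)$, bound $I(\bT;\bS|\bZ)$ or the residual uncertainty by $I(\bX;\bT|\bS)$ using the Markov structure $\bT \leftrightarrow \bX \leftrightarrow \bZ$ together with the assumption $I(\bX;\bS)=\max_{\bZ}I(\bZ;\bS)$ (sufficiency of $\bZ$ for $\bS$), and relate the remaining term $H(\bT|\bZ,\bS)$ — the part of $\bT$ unexplained even given the surrogate — to the supervised Bayes error $P^e_{\rm sup}$ times $\log K$ via the same entropy-to-error inequality applied to the optimal representation. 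Collecting the pieces yields $\bar P^e \le \log 2 + P^e_{\rm sup}\cdot \log K + I(\bX;\bT|\bS)$, which is exactly $u^e$.

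For the second part, observe that in the bound $u^e = \log 2 + P^e_{\rm sup}\cdot\log K + I(\bX;\bT|\bS)$ only the last term depends on the method, through its surrogate signal $\bS$. By the chain rule of mutual information, $I(\bX;\bT|\bS) = I(\bX;\bT) - I(\bS;\bT) + I(\bS;\bT|\bX) = I(\bX;\bT) - I(\bS;\bT)$, where the last equality uses $\bS \leftrightarrow \bX \leftrightarrow \bT$ so that $I(\bS;\bT|\bX)=0$ (the surrogate is a function of the augmented input, carrying no extra task information beyond $\bX$). Hence $u^e$ is monotone decreasing in $I(\bS;\bT)$, and the assumption lets us identify $I(\bS;\bT)$ with $I(\bZ;\bT)$ for the representation each method learns. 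Theorem~\ref{thm:task-information} already gives $I(\bX;\bT) = I(\bZ_{\rm sup};\bT) \ge I(\bZ_{\rm Prelax};\bT) \ge \max(I(\bZ_{\rm mv};\bT), I(\bZ_{\rm PL};\bT))$, so reversing the inequality under the decreasing map produces $u^e_{\rm sup} \le u^e_{\rm Prelax} \le \min(u^e_{\rm mv}, u^e_{\rm PL})$, and $\min(\cdot) \le \max(\cdot)$ is trivial.

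I expect the main obstacle to be the first part: getting the constants exactly right in the passage from Bayes error to conditional entropy and back. The thresholding operator $\operatorname{Th}(P^e)=\min\{\max\{P^e,0\},1-1/K\}$ is introduced precisely because the raw Fano/Hellman--Raviv inequalities are only tight (and only correctly oriented) in a certain range, so I would need to be careful to invoke the variant that produces the additive $\log 2$ slack and the $\log K$ factor rather than a cleaner but false bound. The second, algebraic part is essentially bookkeeping once the decomposition $I(\bX;\bT|\bS) = I(\bX;\bT) - I(\bZ;\bT)$ is justified via the Markov chains and the sufficiency assumption; the only subtlety there is confirming that "retaining all information in $\bX$ about $\bS$" indeed forces $I(\bZ;\bT) = I(\bS;\bT)$ rather than merely $\ge$, which again follows from the data-processing inequality applied along $\bT \leftrightarrow \bS \leftrightarrow \bX \leftrightarrow \bZ$ combined with sufficiency.
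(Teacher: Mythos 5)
Your overall plan does track the paper's route: the paper proves the scalar bound as a lemma by chaining $P^e_{\rm ssl}\le -\log(1-P^e_{\rm ssl})\le H(\bT\mid \bZ_{\rm ssl})$ (Feder--Merhav) with the identity $H(\bT\mid\bZ_{\rm ssl})=H(\bT\mid\bZ_{\rm sup})+I(\bX;\bT\mid\bS)$ (inherited from the mutual-information identity $I(\bZ;\bT)=I(\bX;\bT)-I(\bX;\bT\mid\bS)$ established in the proof of Theorem~\ref{thm:task-information}) and Fano's inequality $H(\bT\mid\bZ_{\rm sup})\le\log 2+P^e_{\rm sup}\log K$; the ordering of the $u^e$'s then follows because the bounds differ only in the gap term $I(\bX;\bT\mid\bS)$. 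Your part~1 is completable along your proposed decomposition, though the bookkeeping is swapped relative to what actually works: under sufficiency of $\bZ$ for $\bS$ (and a deterministic encoder) one gets $I(\bT;\bS\mid\bZ)=0$, while the $I(\bX;\bT\mid\bS)$ term comes from $H(\bT\mid\bZ,\bS)\le H(\bT\mid\bS)=H(\bT\mid\bX)+I(\bX;\bT\mid\bS)\le H(\bT\mid\bZ_{\rm sup})+I(\bX;\bT\mid\bS)$, not from bounding $I(\bT;\bS\mid\bZ)$ by $I(\bX;\bT\mid\bS)$.

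The genuine error is in part~2: you justify $I(\bZ;\bT)=I(\bS;\bT)$ by data processing along the chain $\bT\leftrightarrow\bS\leftrightarrow\bX\leftrightarrow\bZ$, but this is not a Markov chain --- it would require $\bT\indep\bX\mid\bS$, i.e.\ $I(\bX;\bT\mid\bS)=0$, which is exactly the quantity the theorem is about and is nonzero in general (the surrogate $\bS$, e.g.\ a rotation angle or another view, does not exhaust the task information in $\bX$). The valid chains are only $\bT\leftrightarrow\bX\leftrightarrow\bZ$, $\bS\leftrightarrow\bX\leftrightarrow\bZ$ and $\bT\leftrightarrow\bX\leftrightarrow\bS$. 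Moreover, sufficiency alone cannot force $I(\bZ;\bT)=I(\bS;\bT)$: the identity encoder $\bZ=\bX$ is sufficient for $\bS$ yet has $I(\bZ;\bT)=I(\bX;\bT)\ge I(\bS;\bT)$ with strict inequality in general; the equality needs the \emph{minimality} assumption as well (in the paper's appendix formulation, $I(\bZ;\bT\mid\bS)\le H(\bZ\mid\bS)=0$) together with $I(\bS;\bT\mid\bX)=0$. The detour is also unnecessary: either use the identity $I(\bZ;\bT)=I(\bX;\bT)-I(\bX;\bT\mid\bS)$ from the proof of Theorem~\ref{thm:task-information} so that the ordering of the $u^e$'s is literally the reversed ordering of the $I(\bZ;\bT)$'s, or order the gap terms directly via your own decomposition $I(\bX;\bT\mid\bS)=I(\bX;\bT)-I(\bS;\bT)$ plus the trivial monotonicity $I(\bS_v,\bS_a;\bT)\ge\max\bigl(I(\bS_v;\bT),I(\bS_a;\bT)\bigr)$, with $u^e_{\rm sup}\le u^e_{\rm Prelax}$ following from $I(\bX;\bT\mid\bS)\ge 0$. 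With that repair your argument matches the paper's conclusion.
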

Theorem \ref{thm:task-information} shows that Prelax extracts more task-relevant information than multi-view and predictive methods, and Theorem \ref{thm:bayes-errors} further shows that Prelax has a tighter upper bound on the downstream Bayes error. Therefore, Prelax is indeed theoretically superior to previous unsupervised methods by utilizing both pretext-invariant and pretext-aware features. Proofs are in Appendix \ref{sec:appendix-proofs}.

\section{Practical Implementation}

In this part, we present three practical variants of Prelax to generalize existing multi-view backbones: 1) one with existing multi-view augmentations (Prelax-std); 2) one with a stronger augmentation, image rotation (Prelax-rot); and 3) one with previous two strategies (Prelax-all).

\subsection{Backbone} 

BYOL~\cite{BYOL} and SimSiam~\cite{simsiam} are both similarity-based methods and they differ mainly in the design of the target network $\cF_{\bm\phi}$. BYOL~\cite{BYOL}  utilizes momentum update of the target parameters $\bm\phi$ from the online parameters $\btheta$, \ie ${\bm\phi} \leftarrow \tau{\bm\phi} + (1-\tau)\btheta$,
where $\tau\in[0,1]$ is the target decay rate. While SimSiam~\cite{simsiam} simply regards the (stopped-gradient) online network as the target network, \ie ${\bm\phi}\leftarrow\operatorname{sg}(\btheta)$. 
We mainly take SimSiam for discussion and our analysis also applies to BYOL.

For a given training image $\bx$, SimSiam draws two random augmentations $(t_1, t_2)$ and get two views $(\bx_1,\bx_2)$, respectively. Then, SimSiam maximizes the similarity of their representations with a dual objective, where the two views can both serve as the input and the target to each other,
\begin{equation}
\cL_{\rm Simsiam}(\bx;\btheta)=\Vert\cG_{\btheta}(\cF_{\btheta}(\bx_1))-\cF_{\bm\phi}(\bx_2)\Vert_2^2 + \Vert\cG_{\btheta}(\cF_{\btheta}(\bx_2))-\cF_{\bm\phi}(\bx_1)\Vert_2^2.
\label{eqn:baseline-dual-objective}
\end{equation}

\subsection{Prelax-std}
To begin with, we can directly generalize the baseline method with our Prelax method under existing multi-view augmentation strategies \cite{simclr,BYOL}. For the same positive pair $(\bx_1,\bx_2)$, we can calculate their residual vector $\br_{12}=\cF_{\btheta}(\bx_1)-\cF_{\btheta}(\bx_2)$ and use it for the R2S loss (Eq.~\eqref{eqn:relaxed-similarity-loss}) 
\begin{equation}
\cL^{\alpha}_{\rm R2S}(\bx_1,\bx_2;\btheta)=\Vert\cG_{\btheta}\left(\cF_{\btheta}(\bx_1)\right)-\alpha\cG_{\btheta}(\br_{12}) - \cF_{\bm\phi}(\bx_2)\Vert_2^2.
\label{eqn:Prelax-two-R2S-loss}
\end{equation}
{We note that there is no difference in using $\br_{12}$ or $\br_{21}$ as the two views are dual.} Then, we can adopt the similarity loss in the reverse direction as our similarity constraint loss,
\begin{equation}
    \cL_{\rm sim}(\bx_2,\bx_1;\btheta)=\Vert\cG_{\btheta}\left(\cF_{\btheta}(\bx_2)\right) - \cF_{\bm\phi}(\bx_1)\Vert_2^2.
    \label{eqn:Prelax-std-reverse-similarity-loss}
\end{equation}
At last, we use the residual $\br_{12}$ for the PL loss to predict the augmentation parameters of $\bx_1$, \ie $\bt_1$, because $\br_{12}=\bz_1-\bz_2$ directs towards $\bz_1$.
Combining the three losses above, we obtain our Prelax-std objective, 
\begin{equation}
\cL_{\rm Prelax-std}(\bx;\btheta)=\cL^{\alpha}_{\rm R2S}(\bx_1,\bx_2;\btheta)+\gamma\cL_{\rm PL}(\bx_1,\bx_2,\bt_1;\btheta)+\beta\cL_{\rm sim}(\bx_2,\bx_1;\btheta).
\label{eqn:Prelax-std-overall-objective}
\end{equation}

\subsection{Prelax-rot}

As mentioned previously, with our residual relaxation we can benefit from stronger augmentations that are harmful for multi-view methods. Here, we focus on the image rotation example and propose the Prelax-rot objective with rotation-aware residual vector. To achieve this, we further generalize existing dual-view methods by incorporating a \emph{third} rotation view. 

Specifically, given two views $(\bx_1,\bx_2)$ generated with existing multi-view augmentations, we additionally draw a random rotation angle $a\in\cR=\{0^\circ,90^\circ,180^\circ,270^\circ\}$ and apply it to rotate $\bx_1$ clockwise, leading to the third view $\bx_3$.
Note that the only difference between $\bx_3$ and $\bx_1$ is the rotation semantic $a$. Therefore, if we substitute $\bx_1$ with $\bx_3$ in the similarity loss, we should add a rotation-aware residual $\br_{31}=\bz_3-\bz_1$ to bridge the gap.
Motivated by this analysis, we propose the Rotation Residual Relaxation Similarity (R3S) loss, 
\begin{equation}
    \cL^\alpha_{\rm R3S}(\bx_{1:3};\btheta)=\Vert\cG_{\btheta}(\cF_{\btheta}(\bx_3))-\alpha\cG_{\btheta}(\br_{31})-\cF_{\bm\phi}(\bx_2)\Vert_2^2.
\end{equation}
which replace $\cG_{\btheta}(\cF_{\btheta}(\bx_1))$ by its rotation-relaxed version $\cG_{\btheta}(\cF_{\btheta}(\bx_3))-\alpha\cG_{\btheta}(\br_{31})$ in the similarity loss. {Comparing the R2S loss (Eq.~\ref{eqn:Prelax-two-R2S-loss}) and the R3S loss, we note that the relaxation of the R2S loss accounts for all the semantic shift between $\bx_1$ and $\bx_2$, while that of the R3S loss only accounts for the rotation augmentation between $\bx_1$ and $\bx_3$. 
Therefore, we could use the residual $\br_{31}$ to predict the rotation angle $a$ with the following RotPL loss for its rotation-awareness:}
\begin{equation}
    \cL^{\rm rot}_{\rm PL}(\bx_1,\bx_3,\ba;\btheta)=\operatorname{CE}(\cH_{\btheta}(\br_{31}),a).
\end{equation}
Combining with the similarity constraint,  we obtain the Prelax-rot objective:
\begin{equation}
\cL_{\rm Prelax-rot}(\bx;\btheta)=\cL^{\alpha}_{\rm R3S}(\bx_{1:3};\btheta)+\gamma\cL^{\rm rot}_{\rm PL}(\bx_1,\bx_3,a;\btheta)+\beta\cL_{\rm sim}(\bx_2,\bx_1;\btheta).
\label{eqn:Prelax-rot-overall-objective}
\end{equation}

\subsection{Prelax-all} 
We have developed Prelax-std that cultivates existing multi-view augmentations and Prelax-rot that incorporates image rotation. Here, we further utilize both existing augmentations and image rotation by combining the two objectives together, denoted as Prelax-all:
{
\begin{equation}
\begin{aligned}
\cL_{\rm Prelax-all}(\bx;\btheta)=&\frac{1}{2}\left(\cL^{\alpha_1}_{\rm R2S}(\bx_1,\bx_2;\btheta)+\cL^{\alpha_2}_{\rm R3S}(\bx_{1:3};\btheta)\right)
+\frac{\gamma_1}{2}\cL_{\rm PL}(\bx_1,\bx_2,\bt_1;\btheta)\\
&+\frac{\gamma_2}{2}\cL^{\rm rot}_{\rm PL}(\bx_1,\bx_3,a;\btheta)+\beta\cL_{\rm sim}(\bx_2,\bx_1;\btheta),
\end{aligned}
\label{eqn:Prelax-all-overall-objective}
\end{equation}
}
where $\alpha_1,\alpha_2,\gamma_1,\gamma_2$ denotes the  coefficients for R2S, R3S, PL and RotPL losses, respectively.

{
\subsection{Discussions} %
Here we design three practical versions as different implementations of our generic framework of residual relaxation. Among them, Prelax-std focuses on further cultivating existing augmentation strategies, Prelax-rot is to incorporate the stronger (potentially harmful) rotation augmentation, while Prelax-all combines them all. Through the three versions, we demonstrate the wide applicability of Prelax as a generic framework. As for practical users, they could also adapt Prelax to their own application by incorporating specific domain knowledge. In this paper, as we focus on natural images, we take rotation as a motivating example as it is harmful for natural images. Nevertheless, rotation is not necessarily harmful in other domains, \eg medical images. Instead, random cropping could instead be very harmful for medical images as the important part could lie in the corner. In this scenario, our residual relaxation mechanism could also be used to encode the semantic shift caused by cropping and alleviate its bad effects.}

\section{Experiments}

\textbf{Datasets.} 
Due to computational constraint, we carry out experiments on a range of medium-sized real-world image datasets, including well known benchmarks like CIFAR-10 \cite{cifar}, CIFAR-100 \cite{cifar}, and two ImageNet variants: Tiny-ImageNet-200 (200 classes with image size resized to 32$\times$32) \cite{tinyImageNet} and ImageNette (10 classes with image size 128$\times$128)\footnote{\url{https://github.com/fastai/imagenette}}.

\textbf{Backbones.} 
As Prelax is designed to be a generic method for generalizing existing multi-view methods, we implement it on two different multi-view methods, SimSiam~\cite{simsiam} and BYOL~\cite{BYOL}. 
Specifically, we notice that SimSiam reported results on CIFAR-10, while the official code of BYOL included results on ImageNette. For a fair comparison, we evaluate SimSiam and its Prelax variant on CIFAR-10, and evaluate BYOL and its Prelax variant on ImageNette. In addition, we evaluate SimSiam and its Prelax variant on two additional datasets CIFAR-100 and Tiny-ImageNet-200, which are more challenging because they include a larger number of classes.  
For computational efficiency, we adopt the ResNet-18 \cite{he2016deep} backbone (adopted in SimSiam \cite{simsiam} for CIFAR-10) to benchmark our experiments. For a comprehensive comparison, we also experiment with larger backbones, like ResNet-34 \cite{he2016deep}, and the results are included in Appendix \ref{sec:appendix-larger-backbone}. 

\textbf{Setup.} For Prelax-std, we use the same data augmentations as SimSiam~\cite{simclr,simsiam} (or BYOL \cite{BYOL}), including RandomResizedCrop, RandomHorizontalFlip, ColorJitter, and RandomGrayscale, \etc~ using the PyTorch notations. For Prelax-rot and Prelax-all, we further apply a random image rotation at last of the transformation, where the angles are drawn randomly from $\{0^\circ,90^\circ,180^\circ,270^\circ\}$. To generate targets for the PL objective in Prelax, for each image, we collect the assigned parameters in each random augmentation, such as crop centers, aspect ratios, rotation angles, \etc. 
More details can be found in Appendix \ref{sec:appendix-experimental-details}.

\textbf{Training.} For SimSiam and its Prelax variants, we follow the same hyperparameters in \cite{simsiam} on CIFAR-10. Specifically, we use ResNet-18 as the backbone network, followed by a 3-layer projection MLP, whose hidden and output dimension are both 2048. The predictor is a 2-layer MLP whose hidden layer and output dimension are 512 and 2048 respectively. We use SGD for pre-training with batch size 512, learning rate 0.03, momentum 0.9, weight decay $5\times10^{-4}$, and cosine decay schedule~\cite{cosine} for 800 epochs. For BYOL and its Prelax variants, we also adopt the ResNet-18 backbone, and the projector and predictor are 2-layer MLPs whose hidden layer and output dimension are 256 and 4096 respectively. Following the default hyper-parameters on ImageNette\footnote{\url{https://github.com/deepmind/deepmind-research/tree/master/byol}}, we use LARS optimizer \cite{lars} to train 1000 epochs with batch size 256, learning rate 2.0, weight decay $1\times10^{-6}$ while excluding the biases and batch normalization parameters from both LARS adaptation and weight decay. For the target network, the exponential moving average parameter $\tau$ starts from $\tau_\text{base} = 0.996$ and increases to 1 during training.
As for the Prelax objective, we notice that sometimes, adopting a reverse residual $\br_{21}$ in the R2S loss (Eq.~\eqref{eqn:Prelax-two-R2S-loss}) can bring slightly better results, which could be due to the swapped prediction in SimSiam's dual objective (Eq.~\eqref{eqn:baseline-dual-objective}). Besides, a na\"ive choice of Prelax coefficients already works well: $\alpha=1,\beta=1,\gamma=0.1$ for Prelax-std and Prelax-rot, and $\alpha_1=\alpha_2=1,\beta=1,\gamma_1=\gamma_2=0.1$ for Prelax-all. More discussion about the hyper-parameters of Prelax can be found in Appendix \ref{sec:appendix-sensitivity}.

\textbf{Evaluation.} After unsupervised training, we evaluate the backbone network by fine-tuning a linear classifier on top of its representation with other model weights held fixed. For SimSiam and its Prelax variants, the linear classifier is trained on labeled data from scratch using SGD with batch size 256, learning rate 30.0, momentum 0.9 for 100 epochs. The learning rate decays by 0.1 at the 60-th and 80-th epochs. For BYOL and its Prelax variants, we use SGD with Nesterov momentum over 80 epochs using
batch size 25, learning rate 0.5 and momentum 0.9. 
Besides the in-domain linear evaluation, we also evaluate its transfer learning performance on the out-of-domain data by learning a linear classifier on the labeled target domain data.

\begin{table}\centering
    \caption{Linear evaluation on CIFAR-10 (a) and ImageNette (b) with ResNet-18 backbone. TTA: Test-Time Augmentation. }
\begin{subtable}[h]{0.47\textwidth}
    \centering
    \caption{CIFAR-10.}
    \begin{tabular}{lc}
    \toprule
        Method & Acc. (\%) \\
    \midrule
    Supervised \cite{he2016deep}  (re-produced) & 95.0 \\
    \midrule
    Rotation~\cite{rotation}  (re-produced)  & 88.3 \\
        BYOL~\cite{BYOL} (re-produced) & 91.1 \\
        SimCLR~\cite{simclr} & 91.1 \\
        SimSiam~\cite{simsiam} & 91.8 \\
        \midrule
        \textbf{SimSiam + Prelax}  & \textbf{93.4} \\
     \bottomrule
    \end{tabular}
    \label{tab:CIFAR-10-previous-methods}
\end{subtable}
\hfill
\begin{subtable}[h]{0.47\textwidth}
    \centering
    \caption{ImageNette.}
    \begin{tabular}{lc}
    \toprule
        Method & Acc. (\%) \\
    \midrule
    Supervised  & 91.0 \\
    Supervised + TTA & 92.2 \\
    \midrule
        BYOL~\cite{BYOL} (ResNet-18)  & 91.9 \\
        BYOL~\cite{BYOL} (ResNet-50) & 92.3 \\
        \midrule
        \textbf{BYOL + Prelax (ResNet-18)}  & \textbf{92.6} \\
     \bottomrule
    \end{tabular}
    \label{tab:imagette-previous-methods}
\end{subtable}
\end{table}

\subsection{Performance on Benchmark Datasets}

\textbf{CIFAR-10.} 
In Table \ref{tab:CIFAR-10-previous-methods}, we compare Prelax against previous multi-view methods (SimCLR \cite{simclr}, SimSiam \cite{simsiam}, and BYOL \cite{BYOL}) and predictive methods (Rotation \cite{rotation}) on CIFAR-10. We can see that multi-view methods are indeed better than predictive ones. Nevertheless, predictive learning alone (\eg Rotation) achieves quite good performance, indicating that pretext-aware features are also very useful. By encoding both pretext-invariant and pretext-aware features, Prelax outperforms previous methods by a large margin, and achieve \sota performance on  CIFAR-10. A comparison of the learning dynamics between SimSiam and Prelax can be found in Appendix \ref{sec:appendix-learning-dynamics}.

\textbf{ImageNette.} 
Beside the SimSiam backbone, we further apply our Prelax loss to  the BYOL framework \cite{BYOL} and evaluate the two methods on the ImageNette dataset. In Table \ref{tab:imagette-previous-methods}, Prelax also shows  a clear advantage over BYOL. Specifically, it improves the ResNet-18 version of BYOL by 0.7\%, and even outperforms the ResNet-50 version by 0.3\%.  

Here, we can see that Prelax yields significant improvement on two different datasets with two different backbone methods. Thus, Prelax could serve as a generic method for improving existing multi-view methods by encoding pretext-aware features into the residual relaxation. {For completeness, we also evaluate Prelax on the large scale dataset, ImageNet \cite{deng2009imagenet}, as well as its transferability to other kinds of downstream tasks, such as object detection and instance segmentation on MS COCO \cite{coco}.
As shown in Appendix \ref{sec:appendix-large-scale}, Prelax still consistently outperforms the baselines across all tasks. }

\begin{table}[t]
\caption{ A detailed comparison of SimSiam \cite{simsiam} and Prelax (ours) across three datasets: CIFAR-10 (C10), CIFAR-100 (C100), and  Tiny-ImageNet-200 (Tiny200) with the same hyper-parameters. }
\begin{subtable}{\textwidth}
    \centering
    \caption{In-domain linear evaluation.}
    \begin{tabular}{lccc}
    \toprule
        Method & CIFAR-10 & CIFAR-100 & Tiny-ImageNet-200 \\
    \midrule
    SimSiam \cite{simsiam} & 91.8 & 66.9 & 47.7 \\
    \midrule
    SimSiam + Prelax-std & 92.5 & 67.5 & 47.9 \\
    SimSiam + Prelax-rot & 92.4 & 67.3 & 47.1 \\
    SimSiam + Prelax-all & \textbf{93.4} & \textbf{70.0} & \textbf{49.2} \\
    \bottomrule
    \end{tabular}
    \label{tab:simsiam-in-domain-comparison}
\end{subtable}

\begin{subtable}{\textwidth}
    \centering
    \caption{Out-of-domain linear evaluation.}
    \begin{tabular}{lccc}
    \toprule
    Method & C100 $\to$ C10 & Tiny200 $\to$ C10 & Tiny200 $\to$ C100  \\
    \midrule
    SimSiam  \cite{simsiam} & 44.1 & 43.9 & 21.8 \\
    \midrule
    SimSiam + Prelax-std & \textbf{45.0} & \textbf{45.1} & 21.8 \\
    SimSiam + Prelax-rot & \textbf{45.0} & \textbf{45.1} & 22.0 \\
    SimSiam + Prelax-all & 44.9 & 44.6 & \textbf{22.1} \\
    \bottomrule
    \end{tabular}
    \label{tab:simsiam-transfer-learning}
\end{subtable}
\end{table}

\subsection{Effectiveness of Prelax Variants}

For a comprehensive comparison of the three variants of Prelax objectives (Prelax-std, Prelax-rot, and Prelax-all), we conduct controlled experiments on a range of datasets based on the SimSiam backbone. Except CIFAR-10, we also conduct experiments on CIFAR-100 and Tiny-ImageNet-200, which are more challenging with more classes of images. For a fair comparison, we use the same training and evaluation protocols across all tasks.

\textbf{In-domain Linear Evaluation.} As shown in Table \ref{tab:simsiam-in-domain-comparison}, our Prelax objectives outperform the multi-view objective consistently on all three datasets, where Prelax-all improves SimSiam by 1.6\% on CIFAR-10, 3.1\% on CIFAR-100, and 1.5\% on Tiny-ImageNet-200. Besides, Prelax-std and Prelax-rot are also better than SimSiam in most cases. Thus, the pretext-aware residuals in Prelax indeed help encode more useful semantics. 

\textbf{Out-of-domain Linear Evaluation.} Besides the in-domain linear evaluation, we also transfer the representations to a target domain. In the out-of-domain linear evaluation results shown in Table \ref{tab:simsiam-transfer-learning}, the Prelax objectives still have a clear advantage over the multi-view objective (SimSiam), while sometimes Prelax-std and Prelax-rot enjoy better transferred accuracy than Prelax-all.

\begin{table}[t]
\caption{Linear evaluation results of possible mechanisms for generalized multi-view learning on CIFAR-10 with SimSiam backbone. }
\begin{subtable}[h]{0.45\textwidth}
\centering
\caption{Comparison against alternative options.}
\begin{tabular}{lc}
\toprule
Method & Acc. (\%) \\
\midrule
SimSiam~\cite{simsiam} & 91.8 \\
SimSiam + margin loss  &  91.9 \\
\midrule
Rotation~\cite{rotation} & 88.3 \\
SimSiam + rotation aug.  & 87.9 \\
SimSiam + Rotation loss & 91.7 \\
\midrule
SimSiam + Prelax  (ours) & \textbf{93.4} \\
 \bottomrule
\end{tabular}
\label{tab:CIFAR-10-rotation-comparison}
\end{subtable}
\hfill
\begin{subtable}[h]{0.5\textwidth}
\centering
\caption{Ablation study.}
\begin{tabular}{lc}
\toprule
Method & Acc. (\%) \\
\midrule
Sim (\ie SimSiam~\cite{simsiam}) & 91.8 \\
\midrule
Sim + PL & 92.2 \\
Sim + R2S & 91.5 \\
R2S + PL  & 91.7 \\
Sim + PL + R2S (Prelax-std) & \textbf{92.5} \\
\midrule
Sim + RotPL & 91.1 \\
Sim + R3S & 91.9 \\
R3S + RotPL & 79.8 \\
Sim + RotPL + R3S (Prelax-rot) & \textbf{92.4} \\
\bottomrule
\end{tabular}
\label{tab:CIFAR-10-ablation-study}
\end{subtable}
\end{table}

\subsection{Empirical Understandings of Prelax}

\textbf{Comparison against Alternative Options.}  In Table \ref{tab:CIFAR-10-rotation-comparison}, we compare Prelax against several other relaxation options. ``SimSiam + margin'' refers to the margin loss discussed in Eq.~\eqref{eqn:margin-loss}, which uses a scalar $\eta$ to relax the exact alignment in multi-view methods. Here we tune the margin $\eta=0.5$ with the best performance. Nevertheless, it has no clean advantage over SimSiam. Then, we try several options for incorporating a strong augmentation and image rotation: 1) Rotation is the PL baseline by predicting rotation angles \cite{rotation}, which is inferior to multi-view methods (SimSiam). 2) ``SimSiam + rotation aug.'' directly applies a random rotation augmentation to each view, and learn with the SimSiam loss. However, it leads to lower accuracy, showing that the image rotation, as a strong augmentation, will \emph{hurt} the performance of multi-view methods. 3) ``SimSiam + Rotation'' directly combines the SimSiam loss and the Rotation loss for training, which is still ineffective. 4) Our Prelax shows a significant improvement over SimSiam and other variants, showing that the residual alignment is an effective mechanism for utilizing strong augmentations like rotation.

\textbf{Ablation Study.} We perform ablation study of each component of the Prelax objectives on CIFAR-10. From Table \ref{tab:CIFAR-10-ablation-study}, we notice that simply adding the PL loss alone cannot improve over SimSiam consistently, for example, Sim + RotPL causes 0.7 point drop in test accuracy. While with the help of our residual relaxation, we can improve over the baselines significantly and consistently, for example, Prelax-rot (Sim + RotPL + R3S) brings 0.6 point improvement on test accuracy. Besides, we can see that the PL loss is necessary by making the residual pretext-aware, without which the performance drops a lot, and the similarity constraint (Sim loss) is also important by avoiding bad cases when  augmented images drift far from the anchor image. 
Therefore, the ablation study shows the residual relaxation loss, similarity loss, and PL loss all matter in our Prelax objectives.

\begin{figure}[t]
    \centering
    \begin{subfigure}{0.45\textwidth}
    \includegraphics[width=\textwidth]{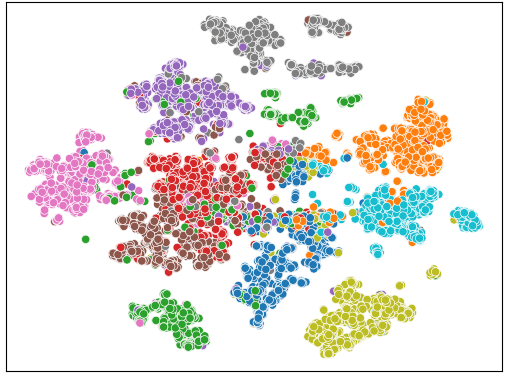}
    \caption{Representation visualization.}
    \label{fig:t-SNE}
    \end{subfigure}
    \hfill
    \begin{subfigure}{0.49\textwidth}
    \centering
    \includegraphics[width=\textwidth]{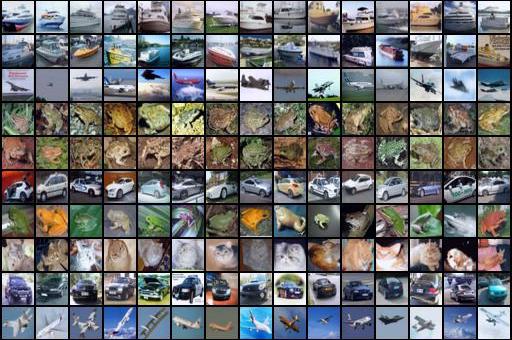}
    \caption{Nearest image retrieval.} 
    \label{fig:image-retrieval}
    \end{subfigure}
    \caption{(a) Representation visualization of our Prelax on CIFAR-10 test set. Each point represents an image representation and its color denotes the class of the image. 
    (b) On CIFAR-10 test set, given 10 random queries (not cherry-picked), we retrieve 15 nearest images in the representation space with Prelax (ours). }
    \vspace{-0.1in}
\end{figure}

\subsection{Qualitative Analysis}

\textbf{Representation Visualization.} To provide an intuitive understanding of the learned representations, we visualize them with t-SNE \cite{van2008visualizing} on Figure \ref{fig:t-SNE}. We can see that in general, our Prelax can learn well-separated clusters of representations corresponding to the ground-truth image classes.

\textbf{Image Retrieval.} In Figure \ref{fig:image-retrieval}, we evaluate Prelax on an image retrieval task. Given a random query image (not cherry-picked), the top-15 most similar images in representation space are retrieved, and the query itself is shown in the first column. 
We can see that although the unsupervised training with Prelax has no access to labels, the retrieved nearest images of Prelax are all correctly from the same class and semantically consistent with the query.

\section{Conclusion}

In this paper, we proposed a generic method, Prelax (Pretext-aware  Residual Relaxation), to account for the (possibly large) semantic shift caused by image augmentations. With  pretext-aware learning of the residual relaxation, our method generalizes existing multi-view learning by encoding both pretext-aware and pretext-invariant representations. Experiments show that our Prelax has outperformed existing multi-view methods significantly on a variety of benchmark datasets.

\section*{Acknowledgement}
Yisen Wang is partially supported by the National Natural Science Foundation of China under Grant 62006153, and Project 2020BD006 supported by PKU-Baidu Fund. 
Jiansheng Yang is supported by the National Science Foundation of China under Grant No. 11961141007.
Zhouchen Lin is supported by the NSF China under Grants 61625301 and 61731018, Project 2020BD006 supported by PKU-Baidu Fund, and Zhejiang Lab (grant no. 2019KB0AB02).

\bibliographystyle{plain}
\bibliography{main}

\appendix

\section{Experimental Details}
\label{sec:appendix-experimental-details}

\textbf{Evaluating Augmentations.} In Table 1, we compare different augmentations with a supervised ResNet-18 \cite{he2016deep} on CIFAR-10 test set. Specifically, we first train a state-of-the-art supervised ResNet-18 with 95.01\% test accuracy on CIFAR-10. \footnote{\url{https://github.com/kuangliu/pytorch-cifar}}. The supervised training uses no data augmentations. Afterwards, we evaluate the effect of different augmentations to the supervised model by applying each one (separately) to pre-process the test images of CIFAR-10. All of the included augmentations (except Rotation) belong to the augmentations used in SimSiam. For a fair comparison, we adopt the same configuration as in SimSiam and refer to the paper for more details. For Rotation, we adopt the same configuration as \cite{rotation}, where we sample a random rotation angle $\{0^\circ,90^\circ,180^\circ,270^\circ\}$ and use it to rotate the raw image clock-wise. 

\textbf{Data Augmentations and PL Targets.} We offer details of the augmentations by taking the SimSiam \cite{simsiam} variant of Prelax as an example. The BYOL \cite{BYOL} variants are implemented in the same way. 
For a fair comparison, we utilize the same augmentations in SimSiam \cite{simsiam}, while collecting the augmentation parameters as the target variables for our Predictive Learning (PL) objective in Prelax. We adopt the PyTorch notations for simplicity.  Specifically, for RandomResizedCrop, the operation randomly draws an $(i,j,h,k)$ pair, where $(i,j)$ denotes the center coordinates of the cropped region, while $(h,k)$ denotes the height and width of the cropped region. Accordingly,  we calculate the relative coordinates, the area ratio, and the aspect ratio (relative to the raw image), as four continuous target variables. Similarly, the ColorJitter opration randomly samples four factors corresponding to the adjustment for brightness, contrast, saturation, hue, respectively. We collect them as four additional continuous target variables.
As for operations like RandomHorizontalFlip, RandomGrayscale, RandomApply,  they draw a binary variable with $0/1$ outcome according to a predefined probability $p$, and apply the augmentations if it is 1 and do nothing otherwise. We collect these random outcomes $(0/1)$ as discrete target variables. As for the rotation operation, we take the rotation angles randomly drawn from the set $\{0^\circ,90^\circ,180^\circ,270^\circ\}$, as a discrete $4$-class categorical variable.

\section{Theoretical Results and Proofs}
\label{sec:appendix-proofs}

Here, we provide the complete proof of the theoretical results in Section 3.3. More rigorously, we give the definition of minimal and sufficient representations for self-supervision \cite{tsai2020self}, and give a more formal description of our results.
\begin{definition}[Minimal and Sufficient Representations for Signal $\bS$]
Let $\bZ^*$ be the minimal and sufficient representation for self-supervised signal $\bS$ if it satisfies the following conditions in the meantime: 1) $\bZ^*$ is sufficient,  $\bZ^*=\underset{\bZ}{\arg \max } I\left(Z; S\right)$; 2) $\bZ^*$ is minimal, \ie $\bZ^*=\underset{\bZ}{\operatorname{argmin}} H\left(\bZ|\bS\right)$.
\end{definition}
The following lemma shows that the maximal mutual information of $I(\bZ^*,\bS)$ is $I(\bX,\bS)$.
\begin{lemma}
For a minimal and sufficient representation $\bZ$ that is obtained with a deterministic encoder $\mathcal{F}_{\boldsymbol{\theta}}$ of input $\bX$ with enough capacity, 
we have $I(\bZ^*;\bS)=I(\bX;\bS)$.
\label{lemma:sufficiency}
\end{lemma}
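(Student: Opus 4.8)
The plan is to prove $I(\bZ^*;\bS)=I(\bX;\bS)$ by a two-sided bound, isolating which half of the definition of $\bZ^*$ each direction uses. Only \emph{sufficiency} matters for the value of $I(\bZ^*;\bS)$; \emph{minimality} constrains $H(\bZ^*\mid\bS)$ and is irrelevant here, so I would not invoke it in this lemma.

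For the direction $I(\bZ^*;\bS)\le I(\bX;\bS)$, I would use the data processing inequality. Since every admissible representation has the form $\bZ=\mathcal{F}_{\boldsymbol{\theta}}(\bX)$ with $\mathcal{F}_{\boldsymbol{\theta}}$ deterministic, conditioning on $\bX$ fixes $\bZ$, so $\bS\to\bX\to\bZ$ is a Markov chain for every $\boldsymbol{\theta}$. Hence $I(\bZ;\bS)\le I(\bX;\bS)$ for all $\boldsymbol{\theta}$, and taking the maximum over encoders, $\max_{\bZ}I(\bZ;\bS)\le I(\bX;\bS)$. Since $\bZ^*$ is sufficient, $I(\bZ^*;\bS)=\max_{\bZ}I(\bZ;\bS)$, which gives the claimed inequality.

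For the reverse direction $I(\bZ^*;\bS)\ge I(\bX;\bS)$, I would exhibit a single encoder that already attains $I(\bX;\bS)$. The "enough capacity" assumption is exactly what guarantees such an encoder lies in the class: the identity map $\mathcal{F}(\bX)=\bX$ (or any injective re-encoding, or a measurable sufficient statistic $T(\bX)$ of $\bX$ for $\bS$) satisfies $I(\mathcal{F}(\bX);\bS)=I(\bX;\bS)$. Therefore $\max_{\bZ}I(\bZ;\bS)\ge I(\bX;\bS)$, and by sufficiency of $\bZ^*$, $I(\bZ^*;\bS)\ge I(\bX;\bS)$. Combining the two bounds closes the proof.

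The only delicate point is this reverse direction: making "deterministic encoder with enough capacity" precise enough that an information-lossless map is guaranteed to be realizable in the hypothesis class. The cleanest formalization is to assume the class contains the identity; if one insists that $\bZ$ be genuinely low-dimensional, one instead assumes the class contains a sufficient statistic of $\bX$ for $\bS$, which exists in general. Everything else reduces to a single application of the data processing inequality, so I would keep the write-up short.
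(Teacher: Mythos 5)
Your proof is correct and follows essentially the same route as the paper: the deterministic encoder gives the Markov chain $\bS\to\bX\to\bZ$, the data processing inequality yields $I(\bZ;\bS)\leq I(\bX;\bS)$, and the capacity assumption lets the sufficient representation attain this bound. Your only addition is to make the ``enough capacity'' step explicit (via the identity map or a sufficient statistic) and to note that minimality is not needed, which the paper leaves implicit but which does not change the argument.
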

\begin{proof}
As the encoder $\cF_{\btheta}$ is deterministic, it induces the following conditional independence: $\bS\indep\bZ\mid\bX$, which leads to the data processing Markov chain $\bS\leftrightarrow\bX\to\bZ$. Accordingly to the data processing inequality (DIP) \cite{cover1999elements}, we have $I(\bZ;\bS)\leq I(\bX;\bS)$, and with enough model capacity in $\cF_{\btheta}$, the sufficient and minimal representation $\bZ^*$ will have $I(\bZ^*;\bS)=\max_{\bZ}I(\bZ;\bS)=I(\bX;\bS)$.
\end{proof}
In the main text, we introduce several kinds of learning signals, the target variable $\bT$, the multi-view signal $\bS_v$, the predictive learning signal $\bS_a$, and the joint signal $(\bS_v,\bS_a)$ used by our Prelax method. For clarity, we denote the learned \emph{minimal and sufficient} representations as $\bZ_{\rm sup}$, $\bZ_{\rm mv}$, $\bZ_{\rm PL}$, $\bZ_{\rm Prelax}$, respectively.

Next, we restate Theorem 1 with the definitions above and provide a complete proof.
\begin{theorem}[restated]
We have the following inequalities on the four minimal and sufficient representations, $\bZ_{\rm sup}$, $\bZ_{\rm mv}$, $\bZ_{\rm PL}$, $\bZ_{\rm Prelax}:$
\begin{equation}
I(\bX;\bT)=I(\bZ_{\rm sup};\bT)\geq I(\bZ_{\rm Prelax};\bT)\geq\max(I(\bZ_{\rm mv};\bT),I(\bZ_{\rm PL};\bT)).
\end{equation}
\label{thm:task-information-appendix}
\end{theorem}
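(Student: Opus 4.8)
The plan is to establish the three quantities in the chain one at a time, in each case reducing to the sufficiency lemma (Lemma~\ref{lemma:sufficiency}) and an application of the data processing inequality.

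\textbf{Step 1: the supervised equality.} Since $\bZ_{\rm sup}$ is by definition the minimal and sufficient representation for the signal $\bT$ itself, Lemma~\ref{lemma:sufficiency} applied with $\bS=\bT$ gives $I(\bZ_{\rm sup};\bT)=I(\bX;\bT)$ immediately. This is the leftmost equality.

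\textbf{Step 2: the Prelax upper bound $I(\bX;\bT)\geq I(\bZ_{\rm Prelax};\bT)$.} The encoder $\cF_{\btheta}$ is deterministic, so we have the Markov chain $\bT\leftrightarrow\bX\to\bZ_{\rm Prelax}$ (the same argument as in the proof of Lemma~\ref{lemma:sufficiency}, using that $\bT$ is conditionally independent of $\bZ_{\rm Prelax}$ given $\bX$). The data processing inequality then yields $I(\bZ_{\rm Prelax};\bT)\leq I(\bX;\bT)$. Note this bound holds for \emph{any} representation obtained by a deterministic encoder, in particular for $\bZ_{\rm mv}$ and $\bZ_{\rm PL}$ as well, which is consistent with the full chain.

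\textbf{Step 3: the lower bound $I(\bZ_{\rm Prelax};\bT)\geq\max(I(\bZ_{\rm mv};\bT),I(\bZ_{\rm PL};\bT))$.} This is the crux. By Lemma~\ref{lemma:sufficiency}, $I(\bZ_{\rm Prelax};(\bS_v,\bS_a))=I(\bX;(\bS_v,\bS_a))$, while $I(\bZ_{\rm mv};\bS_v)=I(\bX;\bS_v)$ and $I(\bZ_{\rm PL};\bS_a)=I(\bX;\bS_a)$. So it suffices to compare task information across representations that are each sufficient for their respective signals. The key idea is that being sufficient for the richer signal $(\bS_v,\bS_a)$ means $\bZ_{\rm Prelax}$ retains at least as much information about $\bX$ as $\bZ_{\rm mv}$ does relative to the common task $\bT$. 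Concretely, I would invoke the monotonicity of task-relevant information under a sufficiency ordering: one shows that a representation which is minimal-sufficient for $(\bS_v,\bS_a)$ can be taken to factor the one that is minimal-sufficient for $\bS_v$ (since $\bS_v$ is a ``coarsening'' of the joint signal), hence $I(\bZ_{\rm Prelax};\bT)\geq I(\bZ_{\rm mv};\bT)$; symmetrically for $\bZ_{\rm PL}$. More carefully, using $I(\bZ;\bT)=I(\bX;\bT)-I(\bX;\bT|\bZ)$ for minimal-sufficient $\bZ$, it reduces to showing $I(\bX;\bT|\bZ_{\rm Prelax})\leq I(\bX;\bT|\bZ_{\rm mv})$ and likewise for $\bZ_{\rm PL}$; this follows because the nuisance information discarded by a representation sufficient for the joint signal is contained in that discarded by one sufficient for either marginal signal. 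Taking the maximum over the two marginal bounds closes the chain.

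\textbf{Anticipated obstacle.} The delicate point is Step 3: one must be careful that ``minimal and sufficient for $(\bS_v,\bS_a)$'' genuinely dominates ``minimal and sufficient for $\bS_v$'' with respect to \emph{all} of $I(\bX;\cdot)$, not just with respect to $\bS_v$. The clean way around this is to appeal to the framework of \cite{tsai2020self}: under their assumptions, task-relevant information $I(\bZ;\bT)$ is governed entirely by $I(\bX;\bZ)$ minus a task-irrelevant remainder, and a representation sufficient for a finer self-supervised signal captures a superset of the task-relevant bits captured by one sufficient for a coarser signal (because both $\bS_v$ and $\bS_a$ are themselves functions of, or are generated alongside, the augmentation process applied to $\bX$, so conditioning on the joint signal removes strictly less task information). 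I would cite the relevant lemma of \cite{tsai2020self} for this monotonicity rather than reprove it from scratch, and spell out only the short DPI arguments for Steps 1 and 2.
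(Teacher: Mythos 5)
Your Steps 1 and 2 are fine and coincide with what the paper needs (Lemma~\ref{lemma:sufficiency} with $\bS=\bT$, and the data processing inequality for the deterministic encoder). The gap is in Step 3, and it is exactly the step you flag as delicate: you reduce the claim to $I(\bX;\bT|\bZ_{\rm Prelax})\leq I(\bX;\bT|\bZ_{\rm mv})$ and then justify it by an unproven ``factoring/coarsening monotonicity'' which you propose to import wholesale from \cite{tsai2020self}. That monotonicity is the entire content of the theorem, and the heuristic you give for it (``the nuisance discarded by a representation sufficient for the joint signal is contained in that discarded by one sufficient for a marginal signal'') is not an argument. Crucially, your Step 3 never actually uses \emph{minimality}, only sufficiency, and without minimality the inequality is simply false: $\bZ_{\rm mv}=\bX$ is sufficient for $\bS_v$ and has $I(\bZ_{\rm mv};\bT)=I(\bX;\bT)\geq I(\bZ_{\rm Prelax};\bT)$, reversing the claimed direction whenever the Prelax representation loses any task information. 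So any correct proof must make the minimality hypothesis do real work, and your sketch does not.

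The paper's route does this by pivoting from conditioning on the representation to conditioning on the \emph{signal}. For each minimal and sufficient $\bZ$ with signal $\bS$, write $I(\bZ;\bT)=I(\bZ;\bT;\bS)+I(\bZ;\bT|\bS)$; minimality gives $I(\bZ;\bT|\bS)\leq H(\bZ|\bS)=0$, and sufficiency (via $I(\bZ;\bS)=I(\bX;\bS)$ together with $I(\bZ;\bS|\bT)\leq I(\bX;\bS|\bT)$ from the conditional Markov chain) gives $I(\bZ;\bT;\bS)=I(\bX;\bT;\bS)$. Hence $I(\bZ;\bT)=I(\bX;\bT)-I(\bX;\bT|\bS)$: the gap to the supervised value depends only on the signal $\bS$, not on $\bZ$, so the three-way comparison collapses to comparing $I(\bX;\bT|\bS_v,\bS_a)$ against $I(\bX;\bT|\bS_v)$ and $I(\bX;\bT|\bS_a)$ (Eq.~\eqref{eqn:gap-inequalities}). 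That is the identity your proposal is missing, and it is what makes the lower bound in the chain provable rather than cited. One further caveat, which applies to the final comparison in either formulation: $I(\bX;\bT|\bS_v,\bS_a)\leq I(\bX;\bT|\bS_v)$ is not a universal fact about conditional mutual information (conditioning on an extra variable can increase it); it amounts to assuming a nonnegative interaction term $I(\bX;\bT;\bS_a|\bS_v)\geq 0$, so if you want full rigor you should state that assumption explicitly rather than bury it in a citation.
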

\begin{proof}
By Lemma \ref{lemma:sufficiency}, we have the following properties in the self-supervised representations:
\begin{equation}
    I(\bZ_{\rm mv};\bS_v)=I(\bX;\bS_v), \
    I(\bZ_{\rm PL};\bS_a)=I(\bX;\bS_a), \
    I(\bZ_{\rm Prelax};\bS_v,\bS_a)=I(\bX;\bS_v,\bS_a).
\end{equation}
Thus, for each minimal and sufficient self-supervised representation $\bZ\in\left\{\bZ_{\rm mv},\bZ_{\rm PL},\bZ_{\rm Prelax}\right\}$ and the corresponding signal $\bS\in\{\bS_v,\bS_a,(\bS_v,\bS_a)\}$, we have,
\begin{equation}
    I(\bZ;\bS;\bT)=I(\bX;\bS;\bT), \ I(\bZ;\bS|\bT)=I(\bX;\bS|\bT).
\end{equation}
Besides, because $\bZ$ is minimal, we also have,
\begin{equation}
    I(\bZ;\bT|\bS)\leq H(\bZ|\bS)=0.
\end{equation}
Together with the two equalities above, we further have the following equality on $I(\bZ;\bT)$:
\begin{equation}
\begin{aligned}
I(\bZ;\bT)&=I(\bZ;\bT;\bS) + I(\bZ;\bT|\bS)\\
&=I(\bX;\bT;\bS) + \underbrace{I(\bZ;\bT|\bS)}_0 \\
&=I(\bX;\bT)-I(\bX;\bT|\bS)\\
&=I(\bZ_{\rm sup};\bT)-I(\bX;\bT|\bS).
\end{aligned}
\label{eqn:self-sup-target-MI}
\end{equation}
Therefore, the gap between supervised representation $\bZ_{\rm sup}$ and each self-supervised representation $\bZ\in\left\{\bZ_{\rm mv},\bZ_{\rm PL},\bZ_{\rm Prelax}\right\}$ is $I(\bX;\bT|\bS)$, for which we have the following inequalities:
\begin{equation}
    \max(I(\bX;\bT|\bS_v),I(\bX;\bT|\bS_a))\geq \min(I(\bX;\bT|\bS_v),I(\bX;\bT|\bS_a))\geq
    I(\bX;\bT|\bS_v,\bS_a).
    \label{eqn:gap-inequalities}
\end{equation}
Further combining with Lemma \ref{lemma:sufficiency} and Eq.~\eqref{eqn:self-sup-target-MI}, we arrive at the inequalities on the target mutual information:
\begin{equation}
I(\bX;\bT)=I(\bZ_{\rm sup};\bT)\geq I(\bZ_{\rm Prelax};\bT)\geq\max(I(\bZ_{\rm mv};\bT),I(\bZ_{\rm PL};\bT)),
\end{equation}
which completes the proof.
\end{proof}
\textbf{Remark.}
Theorem \ref{thm:task-information-appendix} shows that the downstream performance gap between supervised representation $\bZ_{\rm sup}$ and self-supervised representation $\bZ$ is $I(\bX;\bT|\bS)$, \ie the information left in $\bX$ about the target variable $\bT$ except that in $\bS$. Thus, if we choose a self-supervised signal $\bS$ such that the left information is relatively small, we can guarantee a good downstream performance. Comparing the three self-supervised methods with learning signal $\bS_v$, $\bS_a$, and $(\bS_v,\bS_a)$, we can see that our Prelax utilizes more information in $\bX$, and consequently, the left information $I(\bX;\bT|\bS_v,\bS_a)$ is smaller than both multi-view methods $I(\bX;\bT|\bS_a)$ and predictive methods $I(\bX;\bT|\bS_a)$. 

In the following theorem, we further show that our Prelax has a tighter upper bound on the Bayes error of downstream classification tasks. 
To begin with, we prove a relationship between the supervised and self-supervised Bayes errors following \cite{tsai2020self}.
\begin{lemma}
Assume that $\bT$ is a $K$-class categorical variable. 
We define the Bayes error on downstream task $T$ as 
\begin{equation}
P^{e}:=\mathbb{E}_\bz\left[1-\max _{\bt \in \bT} P\left(\bT=\bt|\bz\right)\right].
\end{equation}
Denote the Bayes error of self-supervised learning (SSL) methods with signal $\bS$ as $P^e_{\rm ssl}$ and that of supervised methods as $P^e_{\rm sup}$. Then, we can show that the SSL Bayes error $P^e_{\rm ssl}$ can be upper bounded by the supervised Bayes error $P^e_{\rm sup}$, \ie
\begin{equation}
\bar P^e_{\rm ssl}\leq u^e :=\log 2 + P^e_{\rm sup}\cdot\log K + I(\bX;\bT|\bS).
\end{equation}
where $\bar P^e=\operatorname{Th}(P^e)=\min \{\max \{P^e, 0\}, 1-1 /K\}$ denotes the thresholded Bayes error in the feasible region, and $u^e$ denote the value of the upper bound. 
\label{eqn:general-upper-bound}
\end{lemma}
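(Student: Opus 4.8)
The plan is to sandwich $\bar P^e_{\rm ssl}$ between a Fano-type inequality and an exact entropy identity: first bound the Bayes error by the conditional entropy $H(\bT\mid\bZ_{\rm ssl})$, then evaluate that conditional entropy using the minimality/sufficiency of $\bZ_{\rm ssl}$ together with Eq.~\eqref{eqn:self-sup-target-MI}, and finally control the leftover supervised term $H(\bT\mid\bX)$ by the ordinary Fano inequality.

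\emph{Step 1: Bayes error $\le$ conditional entropy.} I would start from the elementary fact that for any probability vector $p=(p_1,\dots,p_K)$ with $m:=\max_i p_i$ one has $1-m\le-\log m\le H(p)$. The right inequality is immediate because $-\log p_i\ge-\log m$ for every $i$; the left one follows because $g(m):=-\log m-(1-m)$ satisfies $g(1)=0$ and $g'(m)=1-1/m\le 0$ on $(0,1]$, so $g\ge 0$ there. Applying this with $p=P(\bT=\cdot\mid\bz)$ and taking $\mathbb{E}_\bz$ gives $P^e_{\rm ssl}\le H(\bT\mid\bZ_{\rm ssl})$, and since $\operatorname{Th}(x)=\min\{\max\{x,0\},1-1/K\}\le x$ for every $x\ge 0$, also $\bar P^e_{\rm ssl}\le H(\bT\mid\bZ_{\rm ssl})$.

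\emph{Steps 2--3: evaluating the conditional entropy and invoking Fano on the supervised side.} For $\bZ_{\rm ssl}$ the minimal and sufficient representation of the signal $\bS$, Eq.~\eqref{eqn:self-sup-target-MI} gives $I(\bZ_{\rm ssl};\bT)=I(\bX;\bT)-I(\bX;\bT\mid\bS)$, hence $H(\bT\mid\bZ_{\rm ssl})=H(\bT)-I(\bZ_{\rm ssl};\bT)=H(\bT\mid\bX)+I(\bX;\bT\mid\bS)$. To bound $H(\bT\mid\bX)$, let $\hat\bt(\bx)=\arg\max_{\bt}P(\bT=\bt\mid\bx)$ be the Bayes-optimal (MAP) predictor, whose misclassification probability is exactly $P^e_{\rm sup}$ (the same value whether computed from $\bX$ or from the sufficient $\bZ_{\rm sup}$, since sufficiency forces $\bT\indep\bX\mid\bZ_{\rm sup}$). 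As $\hat\bt(\bX)$ is a function of $\bX$, $H(\bT\mid\bX)\le H(\bT\mid\hat\bt(\bX))$, and the classical Fano inequality gives $H(\bT\mid\hat\bt(\bX))\le H_b(P^e_{\rm sup})+P^e_{\rm sup}\log(K-1)\le\log 2+P^e_{\rm sup}\log K$, where $H_b$ is the binary entropy. Chaining everything, $\bar P^e_{\rm ssl}\le H(\bT\mid\bX)+I(\bX;\bT\mid\bS)\le\log 2+P^e_{\rm sup}\log K+I(\bX;\bT\mid\bS)=u^e$, as claimed.

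\emph{Main obstacle.} The only step that is not purely textbook is the reversed direction of Fano in Step~1 — an \emph{upper} bound on the Bayes error by conditional entropy — and it rests entirely on the one-line inequality $1-m\le-\log m$; everything else is bookkeeping. The care needed is in tracking which conditional entropies coincide: it is the minimality of $\bZ_{\rm ssl}$ (the fact $H(\bZ_{\rm ssl}\mid\bS)=0$ used to derive Eq.~\eqref{eqn:self-sup-target-MI}) that makes $H(\bT\mid\bZ_{\rm ssl})$ collapse exactly to $H(\bT\mid\bX)+I(\bX;\bT\mid\bS)$ rather than to something larger. The threshold $\operatorname{Th}(\cdot)$ is inessential here, since $P^e_{\rm ssl}\le 1-1/K$ automatically; it is retained only to match the form in which the bound is later consumed. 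Finally, substituting the ordering of $I(\bX;\bT\mid\bS)$ from Eq.~\eqref{eqn:gap-inequalities} into $u^e$ immediately yields the downstream chain $u^e_{\rm sup}\le u^e_{\rm Prelax}\le\min(u^e_{\rm mv},u^e_{\rm PL})$.
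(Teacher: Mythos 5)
Your proof is correct and takes essentially the same route as the paper's: bound the SSL Bayes error by $H(\bT\mid\bZ_{\rm ssl})$, rewrite this entropy as the supervised conditional entropy plus $I(\bX;\bT\mid\bS)$ using the minimality/sufficiency identity, and then apply a weakened Fano inequality to get the $\log 2 + P^e_{\rm sup}\log K$ term. The only cosmetic differences are that you derive from first principles (via $1-m\le-\log m\le H(p)$ and classical Fano through the MAP predictor on $\bX$) the two inequalities the paper simply cites from Feder--Merhav and Cover--Thomas, and that you work with $H(\bT\mid\bX)$ instead of $H(\bT\mid\bZ_{\rm sup})$, which coincide by sufficiency of the supervised representation.
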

\begin{proof}
Denote the minimal and sufficient representations learned by SSL and supervised methods as $\bZ_{\rm ssl}$ and $\bZ_{\rm sup}$, respectively.
We use two following inequalities from \cite{feder1994relations} and \cite{cover1999elements},
\begin{gather}
    P^e_{\rm ssl}\leq-\log \left(1-P^e_{\rm ssl}\right) \leq H\left(\bT \mid \bZ_{\rm ssl}\right), \label{eqn:bayes-ssl}
    \\
    H(\bT|\bZ_{\rm sup})\leq \log2 + P^e_{\rm sup}\log K \label{eqn:bayes-sup}.
\end{gather}
Comparing $H(\bT|\bZ)$ and $H(\bT|\bZ_{\rm sup})$, together with Eq.~\eqref{eqn:self-sup-target-MI}, we can show that they are tied with the following equality,
\begin{equation}
\begin{aligned}
    H(\bT|\bZ_{\rm ssl})&=H(\bT)-I(\bZ_{\rm ssl};\bT)\\
    &=H(\bT)-I(\bZ_{\rm sup};\bT)+I(\bX;\bT|\bS)\\
    &=H(\bT|\bZ_{\rm sup})+I(\bX;\bT|\bS).
\end{aligned}
\end{equation}
Further combining Eq.~\eqref{eqn:bayes-ssl} \& \eqref{eqn:bayes-sup}, we have
\begin{equation}
\begin{aligned}
P^e_{\rm ssl}&\leq H\left(\bT \mid \bZ_{\rm ssl}\right)\\
&=H(\bT|\bZ_{\rm sup})+I(\bX;\bT|\bS)\\
&\leq\log2 + P^e_{\rm sup}\log K+I(\bX;\bT|\bS):=u^e,
\end{aligned}
\end{equation}
which completes the proof.
\end{proof}
Given the upper bound in Lemma \ref{eqn:general-upper-bound}, and the inequalities on the downstream performance gap $I(\bX;\bT|\bS)$ in Eq.~\eqref{eqn:gap-inequalities}, we will arrive at the following inequalities on the upper bounds on the self-supervised representations.
\begin{theorem}[restated]
We denote the the upper bound on the Bayes error of each representation, $\bZ_{\rm sup},\bZ_{\rm mv},\bZ_{\rm PL},\bZ_{\rm Prelax}$, by $u^e_{\rm sup},u^e_{\rm mv}, u^e_{\rm PL},u^e_{\rm Prelax}$, respectively. Then, they satisfy the following inequalities:
\begin{equation}
    u^e_{\rm sup}\leq u^e_{\rm Prelax} \leq \min(u^e_{\rm mv}, u^e_{\rm PL}) \leq \max(u^e_{\rm mv}, u^e_{\rm PL}).
\end{equation}
\label{thm:bayes-errors-appendix}
\end{theorem}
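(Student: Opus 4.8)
The plan is to derive Theorem~\ref{thm:bayes-errors-appendix} directly from Lemma~\ref{eqn:general-upper-bound} together with the information-inequality \eqref{eqn:gap-inequalities} already established in the proof of Theorem~\ref{thm:task-information-appendix}. The key observation is that the upper bound $u^e$ produced by Lemma~\ref{eqn:general-upper-bound} has the form
\begin{equation}
u^e = \log 2 + P^e_{\rm sup}\cdot\log K + I(\bX;\bT|\bS),
\end{equation}
and the first two terms are \emph{identical} for every method --- they depend only on the supervised Bayes error $P^e_{\rm sup}$ and the number of classes $K$, not on the self-supervised signal $\bS$. Hence the ordering of the bounds $u^e_{\rm sup}, u^e_{\rm mv}, u^e_{\rm PL}, u^e_{\rm Prelax}$ is governed entirely by the ordering of the residual-information terms $I(\bX;\bT|\bS)$ for the respective signals $\bS \in \{\bT, \bS_v, \bS_a, (\bS_v,\bS_a)\}$.

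The steps I would carry out are as follows. First, instantiate Lemma~\ref{eqn:general-upper-bound} four times, once for each signal, to get $u^e_{\rm mv} = \log 2 + P^e_{\rm sup}\log K + I(\bX;\bT|\bS_v)$, $u^e_{\rm PL} = \log 2 + P^e_{\rm sup}\log K + I(\bX;\bT|\bS_a)$, $u^e_{\rm Prelax} = \log 2 + P^e_{\rm sup}\log K + I(\bX;\bT|\bS_v,\bS_a)$. For $u^e_{\rm sup}$, observe that ``supervised'' corresponds to taking the signal to be $\bT$ itself, so $I(\bX;\bT|\bT)=0$ and $u^e_{\rm sup}=\log 2 + P^e_{\rm sup}\log K$; alternatively one notes $u^e_{\rm sup}$ is just the value with the residual term set to zero, which is its minimum possible value. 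Second, invoke \eqref{eqn:gap-inequalities}, namely
\begin{equation}
\max\!\big(I(\bX;\bT|\bS_v),I(\bX;\bT|\bS_a)\big)\geq \min\!\big(I(\bX;\bT|\bS_v),I(\bX;\bT|\bS_a)\big)\geq I(\bX;\bT|\bS_v,\bS_a)\geq 0,
\end{equation}
where the last inequality is nonnegativity of conditional mutual information, and where the monotonicity $I(\bX;\bT|\bS_v,\bS_a)\leq \min(\cdots)$ follows because conditioning on the larger sigma-algebra generated by $(\bS_v,\bS_a)$ can only remove more of the $\bT$-information from $\bX$ --- this is the same ``more conditioning helps'' fact used to prove \eqref{eqn:gap-inequalities} in Theorem~\ref{thm:task-information-appendix}. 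Third, add the common constant $\log 2 + P^e_{\rm sup}\log K$ to every term in this chain of inequalities; since adding a constant preserves order, one immediately obtains
\begin{equation}
u^e_{\rm sup} \leq u^e_{\rm Prelax} \leq \min(u^e_{\rm mv},u^e_{\rm PL}) \leq \max(u^e_{\rm mv},u^e_{\rm PL}),
\end{equation}
which is exactly the claim.

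I do not expect any serious obstacle here: the theorem is essentially a corollary of Lemma~\ref{eqn:general-upper-bound} and the already-proved inequality \eqref{eqn:gap-inequalities}, and the argument is a one-line ``add a constant, preserve the order'' manipulation. The only point requiring a little care is the justification that $I(\bX;\bT|\bS_v,\bS_a)\leq\min(I(\bX;\bT|\bS_v),I(\bX;\bT|\bS_a))$, i.e.\ that supplying \emph{both} views and augmentation labels as the self-supervised signal leaves no more residual task information than supplying either one alone; but this is precisely what \eqref{eqn:gap-inequalities} asserts and what was established when proving Theorem~\ref{thm:task-information-appendix}, so it may simply be cited. It is also worth remarking (though not strictly needed) that all four quantities $u^e_\bullet$ share the $\bar P^e \leq u^e$ conclusion of Lemma~\ref{eqn:general-upper-bound}, so the theorem indeed bounds the thresholded downstream Bayes errors in the stated order.
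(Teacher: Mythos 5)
Your proposal matches the paper's own derivation: the paper likewise obtains this theorem immediately from Lemma~\ref{eqn:general-upper-bound} (whose bound differs across methods only through the term $I(\bX;\bT|\bS)$) combined with the chain of inequalities in Eq.~\eqref{eqn:gap-inequalities}, so the ordering of the bounds follows by adding the common constant $\log 2 + P^e_{\rm sup}\log K$. Your extra remark that the supervised case corresponds to $I(\bX;\bT|\bT)=0$ is a harmless elaboration of the same argument.
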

Theorem \ref{thm:bayes-errors-appendix} shows that our Prelax enjoys a tighter lower bounds on downstream Bayes error than both multi-view methods and predictive methods.

\section{Evaluation with Larger Backbone Networks} 
\label{sec:appendix-larger-backbone}

In the main text, we conduct experiments with the ResNet-18 backbone network. Here, for completeness, we further evaluate our Prelax with larger backbone networks. Specifically, for SimSiam variants, we evaluate the ResNet-34 \cite{he2016deep} across three datasets, CIFAR-10, CIFAR-100, and Tiny-ImageNet-200. For a fair comparison, we adopt the same hyper-parameters as for the ResNet-18 backbone. As can be seen for Table \ref{tab:simsiam-resnet-34},  all our Prelax variants achieves better results than the SimSiam baseline on all three datasets. Specifically, we can see that our Prelax-all variant attains the best results and it achieves better results with a larger backbone. Besides, we also experiment with ResNet-50 for the BYOL variant, where our Prelax variant also achieves better performance by improving from 92.3\% to 92.7\%.

\begin{table}[t]
    \centering
    \caption{Linear evaluation accuracy (\%) with ResNet-34 backbone.}
    \begin{tabular}{lccc}
    \toprule
        Method & CIFAR-10 & CIFAR-100 & Tiny-ImageNet-200 \\
    \midrule
    SimSiam \cite{simsiam} & 91.2 & 60.9 & 39.0 \\
    \midrule
    SimSiam + Prelax-std & 92.4 & 67.6 & 48.4 \\
    SimSiam + Prelax-rot & 93.0 & 67.0 & 40.9 \\
    SimSiam + Prelax-all & \textbf{93.9} & \textbf{69.3} & \textbf{49.4} \\
    \bottomrule
    \end{tabular}
    \label{tab:simsiam-resnet-34}
\end{table}

\begin{table}[t]\centering
    \caption{Evaluation of different pretraining methods on the downsampled ImageNet dataset (128x128) with ResNet-18 backbone. }
\begin{subtable}[h]{.27\textwidth}
    \centering
    \caption{Linear Evaluation.}
    \begin{tabular}{lc}
    \toprule
        Method & Acc (\%) \\
    \midrule
    BYOL & 49.2 \\
    Prelax (ours) & \textbf{51.1} \\
     \bottomrule
    \end{tabular}
\end{subtable}
\hfill
\begin{subtable}[h]{.38\textwidth}
    \centering
    \caption{Object Detection.}
\begin{tabular}{llll}
\toprule
 Method & AP$_{50}$ & AP       & AP$_{75}$ \\
\midrule
RandInit            & 32.7      & 19.5     & 20.1      \\
\midrule
BYOL                & 36.6      & 22.0     & 22.8      \\
Prelax (ours)             & \textbf{38.1}  & \textbf{23.3} & \textbf{23.9}  \\
\midrule
Supervised          & \textbf{39.4}  & \textbf{24.2} & \textbf{25.3} \\
\bottomrule
\end{tabular}
\end{subtable}
\hfill
\begin{subtable}[h]{.3\textwidth}
    \centering
    \caption{Instance Segmentation.}
\begin{tabular}{ll}
\toprule
 Method & MAP \\
\midrule
RandInit            & 15.8   \\
\midrule
BYOL                & 18.3      \\
Prelax (ours)       & \textbf{19.5}  \\
\midrule
Supervised          & \textbf{20.4} \\
\bottomrule
\end{tabular}
\end{subtable}
\label{tab:imagenet}
\end{table}

\section{Evaluation on Large Scale Datasets}
\label{sec:appendix-large-scale}

\textbf{Setup.} Although we cannot carry out the full ImageNet experiments with limited time and computation, we gather some preliminary results on the downsampled ImageNet dataset  (128x128) with the ResNet-18 backbone. For a fair comparison, our experiments are conducted with the official code of BYOL. All models are trained for 100 epochs with the default hyperparameters. 

\textbf{Evaluation protocol.} For downstream evaluation, we report both the linear evaluation task on ImageNet and two transfer learning tasks on the MS COCO dataset \cite{coco}. Specifically, we perform object detection on the standard RetinaNet \cite{retinanet} with FPN \cite{fpn}, and conduct instance segmentation on the standard Mask R-CNN \cite{maskrcnn} with FPN \cite{fpn}. We compare the performances of models initialized with different pretrained weights on COCO:
\begin{itemize}
    \item \textbf{RandInit}: randomly initialized weights;
    \item \textbf{BYOL}: unsupervised pretrained weights with BYOL;
    \item \textbf{Prelax} (ours): unsupervised pretrained weights with Prelax;
    \item \textbf{Supervised} (oracle): supervised pretrained weights.
\end{itemize}

From Table \ref{tab:imagenet}, we can see that even on the large-scale dataset, our Prelax still has a clear advantage over BYOL on all downstream tasks, including both in-domain linear evaluation and out-of-domain instance segmentation and object detection tasks.

\section{Sensitivity Analysis of Prelax Coefficients}
\label{sec:appendix-sensitivity}
Here we provide a detailed discussion on the effect of each coefficient of our Prelax objectives. We adopt the default hyper-parameters unless specified. For Prelax-std, it has three coefficients, the R2S interpolation coefficient $\alpha$,  the similarity loss coefficient $\beta$, and the predictive loss coefficient $\gamma$. From Figure \ref{fig:Prelax-std-coefficients},  we can see that a positive $\alpha$ introduces certain degree of residual relaxation to the exact alignment and help improve the downstream performance. The best accuracy is achieved with a medium $\alpha$ at around 0.5. In addition, a large similarity coefficient $\beta$ tends to yield better performance, showing the necessity of the similarity constraint. Nevertheless, too large $\beta$ can also diminish the effect of residual relaxation and leads to slight performance drop. At last, a positive PL coefficient $\gamma$ is shown to yield better representations, although it might lead to representation collapse if it is too large, \eg $\gamma>0.5$.

\begin{figure}[h]
    \centering
    \begin{subfigure}{\linewidth}
    \includegraphics[width=\linewidth]{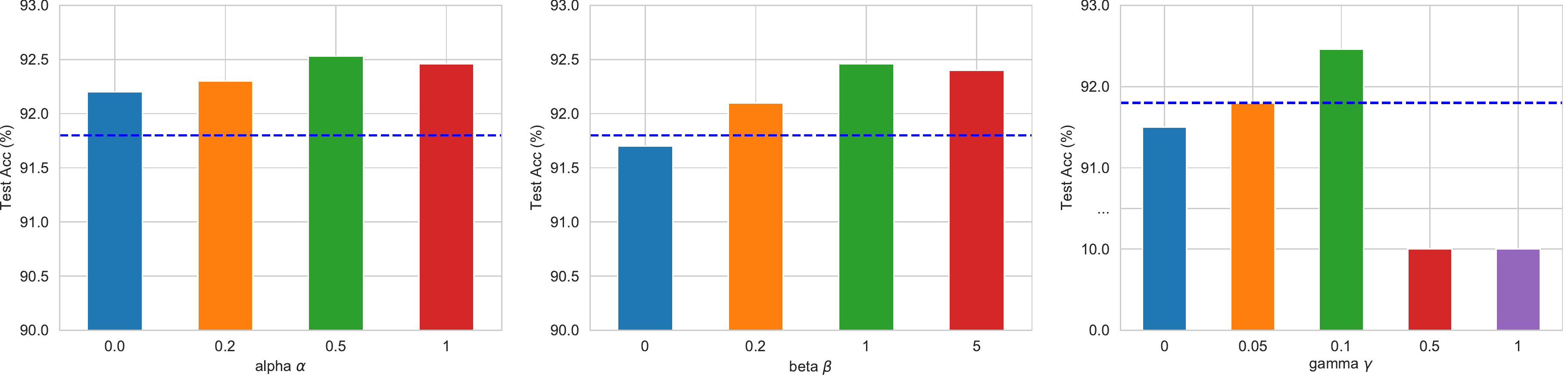}
    \caption{Prelax-std.}
    \label{fig:Prelax-std-coefficients}
    \end{subfigure}
    \begin{subfigure}{\linewidth}
    \includegraphics[width=\linewidth]{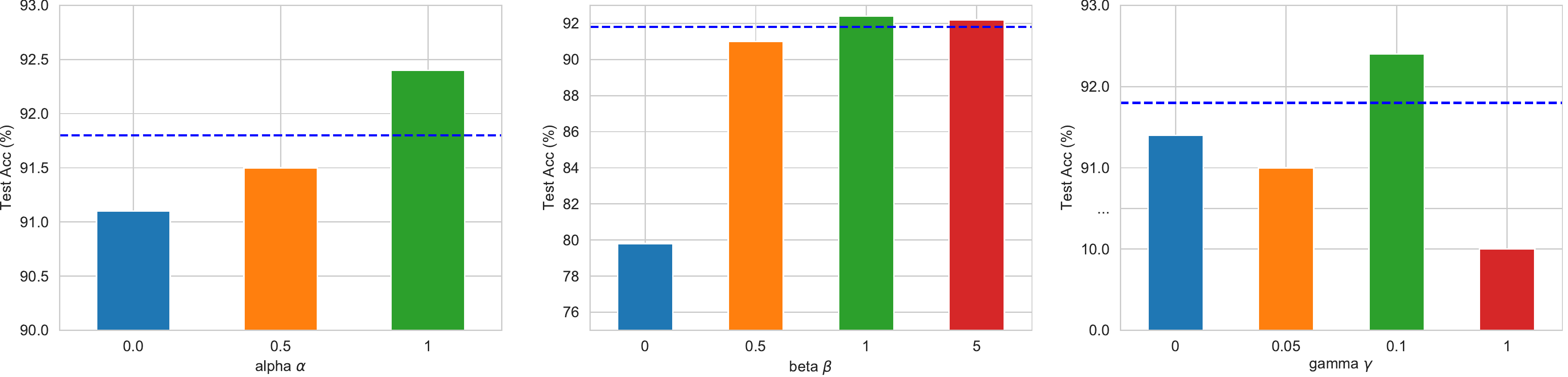}
    \caption{Prelax-rot.}
    \label{fig:Prelax-rot-coefficients}
    \end{subfigure}
    \caption{Linear evaluation results of different Prelax-std and Prelax-rot coefficients on CIFAR-10 with SimSiam backbone. The dashed blue line denotes the result of the SimSiam baseline. }
    \label{fig:my_label}
\end{figure}

For Prelax-rot, as shown in Figure \ref{fig:Prelax-rot-coefficients}, the behaviors of $\beta$ and $\gamma$ are basically consistent with Prelax-std. Nevertheless, we can see that only $\alpha=1$ can yield better results than the SimSiam baseline, while other alternatives cannot. This could be due to the fact that the residual relaxation involves the first view $\bx_1$ and its rotation-augmented view $\bx_3$, and the R3S loss is designed between $\bx_3$ and the second view $\bx_2$. Therefore, in order to align $\bx_3$ and $\bx_2$ like the alignment between $\bx_1$ and $\bx_2$, all the relaxation information in $\bx_3$ (which $\bx_1$ does not have) must be accounted for, which corresponds to $\alpha=1$ in R3S loss. We show that incorporating the rotation information in this way will indeed richer representation semantics and better performance.

Besides, we also find that in certain cases, adopting a reverse residual $\br_{21}$ in the R2S loss can bring slightly better results. In Figure \ref{fig:reverse-residual}, we investigate this phenomenon by comparing the normal and reverse residuals in R2S loss (applied for Prelax-std and Prelax-all) and R3S loss (applied for Prelax-rot). We can see that for R2S loss, using a reverse residual improves the accuracy by around 0.3 point, while for R3S loss, the reverse residual leads to dramatic degradation. This could be due to that R2S relaxes the gap between $\bx_1$ and $\bx_2$, whose representations are learned through swapped prediction in SimSiam's dual objective. Thus, we might also need to swap the direction of the residual to be consistent. Instead, in R3S, the relaxation is crafted between $\bx_1$ and $\bx_3$, so we do not need to swap the direction.
Last but not least, we note that with the normal residual, Prelax-std and Prelax-all still achieve significantly better results than the SimSiam baseline, and the reverse residual can further improve on it.

\begin{figure}[h]\centering
\includegraphics[width=.4\linewidth]{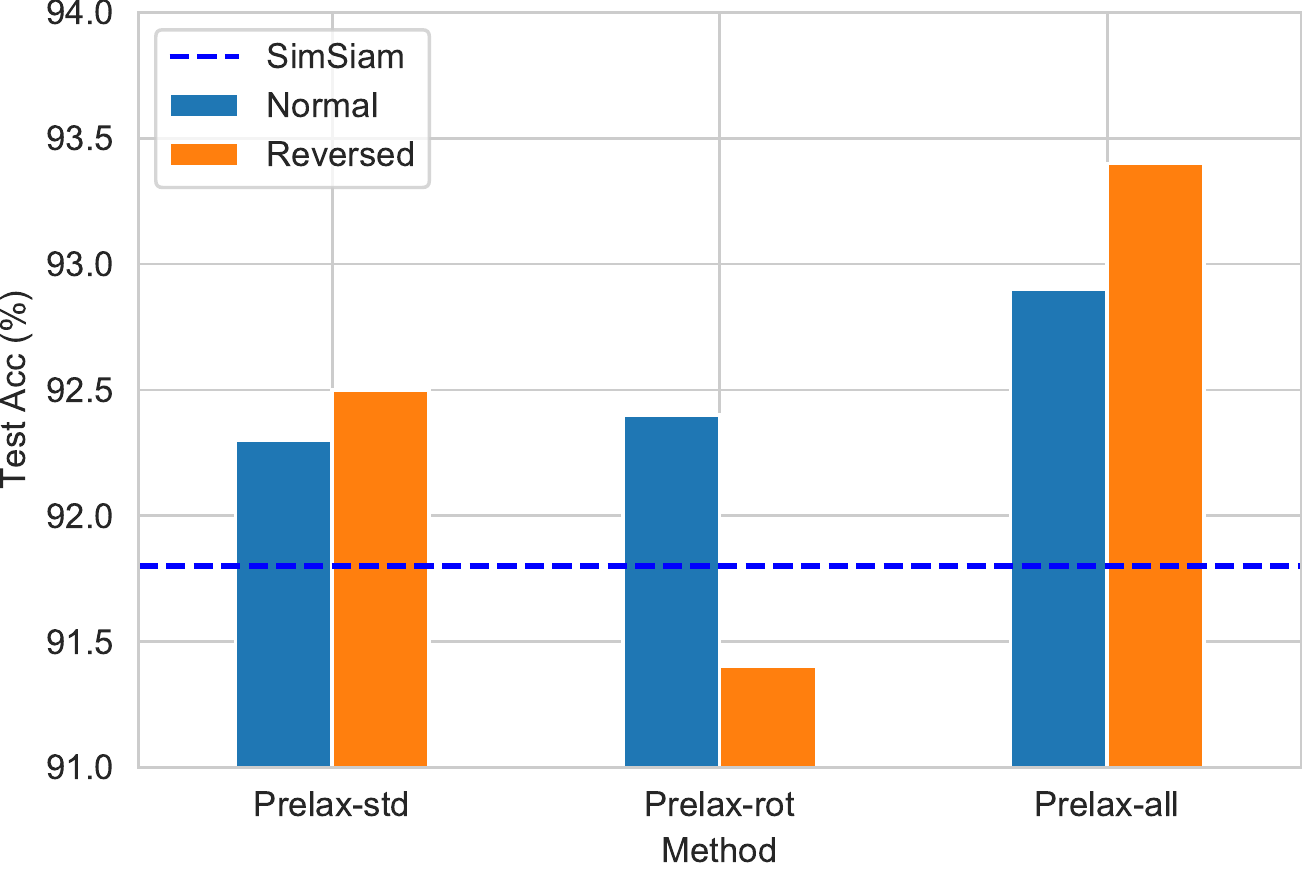}
\caption{Comparison of normal and reverse residuals for Prelax variants on CIFAR-10 with SimSiam backbone.}
\label{fig:reverse-residual}
\end{figure}

\section{Learning Dynamics} 
\label{sec:appendix-learning-dynamics}

In Figure \ref{fig:learning-dynamics}, we compare SimSiam with Prelax-rot in terms of the learning dynamics. We can see that with our residual relaxation technique, both the relaxation loss and the similarity loss become larger than SimSiam. In particular, the size of the residual indeed converges to a large value with Prelax (1.1) than with SimSiam (0.6). As for the downstream classification accuracy, we notice that Prelax-rot starts with a lower accuracy, but converges to a large accuracy at last. This indicates that Prelax-rot learns to encode more image semantics, which may be harder to learn at first,  but will finally outperform the baseline with better representation ability.

\begin{figure}[h]
    \centering
    \includegraphics[width=\linewidth]{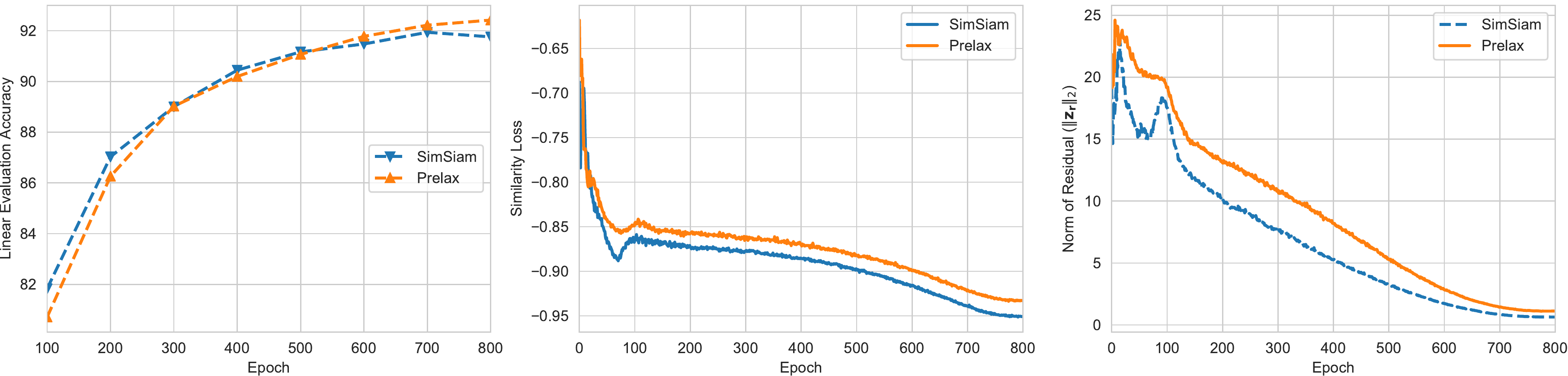}
    \caption{A comparison of learning dynamics between SimSiam \cite{simsiam} and Prelax (ours) on CIFAR-10. Left: linear evaluation accuracy (\%) on the test set per epoch. Middle: similarity loss per epoch. Right: norm of the residual vector (\ie $\Vert\br_{31}\Vert_2$) per epoch. }
    \label{fig:learning-dynamics}
\end{figure}

\end{document}